\documentclass{article}

\usepackage[preprint]{neurips_2026}

\usepackage[utf8]{inputenc}
\usepackage[T1]{fontenc}
\usepackage{hyperref}
\usepackage{url}
\usepackage{booktabs}
\usepackage{amsfonts}
\usepackage{amsmath}
\usepackage{amssymb}
\usepackage{amsthm}
\usepackage{nicefrac}
\usepackage{microtype}
\usepackage{xcolor}
\usepackage{graphicx}
\usepackage{algorithm}
\usepackage{algorithmic}
\usepackage{enumitem}
\usepackage{subcaption}
\usepackage{multirow}
\usepackage{wrapfig}
\usepackage{bm}
\usepackage{pifont}
\usepackage{float}

\newtheorem{proposition}{Proposition}
\newtheorem{remark}{Remark}

\newcommand{\R}{\mathbb{R}}
\newcommand{\E}{\mathbb{E}}

\newcommand{\Conf}{\mathrm{Conf}}
\newcommand{\bx}{\bm{x}}
\newcommand{\by}{\bm{y}}

\title{Adaptive Negative Reinforcement for LLM Reasoning:\\Dynamically Balancing Correction and Diversity in RLVR\thanks{Code and models are available at: [GitHub A-NSR], [GitHub CW-NSR], and [HuggingFace model].}}

\author{%
Yash Ingle \\
  Sardar Vallabhbhai National Institute of Technology (SVNIT), Surat, India \\
  \texttt{U23AI062@coed.svnit.ac.in}
  \and
  Jaival Chauhan\thanks{Author have made equal contribution} \\
  Sardar Vallabhbhai National Institute of Technology (SVNIT), Surat, India \\
  \texttt{U23AI035@coed.svnit.ac.in}
  \and
  Ankit Yadav\thanks{Author have made equal contribution} \\
  Sardar Vallabhbhai National Institute of Technology (SVNIT), Surat, India \\
  \texttt{U23AI039@coed.svnit.ac.in}
  \and
  Sudhakar Mishra\thanks{Corresponding author:sudhakarm@aid.svnit.ac.in} \\
  Sardar Vallabhbhai National Institute of Technology (SVNIT), Surat, India \\\texttt{sudhakarm@aid.svnit.ac.in}
  }

\begin{document}

\maketitle

\begin{abstract}
Reinforcement learning with verifiable rewards (RLVR) has become a highly effective method for improving the reasoning abilities of Large Language Models (LLMs). Recent research shows that Negative Sample Reinforcement (NSR)---which focuses on penalizing \emph{incorrect steps} rather than simply rewarding \emph{correct ones}---can match or even exceed the performance of more complex frameworks like PPO and GRPO across the entire $Pass@k$ spectrum. However, current NSR techniques usually follow a \emph{uniform} approach. They usually apply a fixed penalty throughout the training process and treat every incorrect response with the same \emph{weight}. This misses two major factors: the model's error patterns gets better as it trains, and a "confident mistake" is far more informative than a random guess.

To address these limitations, we propose two extensions to the NSR framework: \textbf{Adaptive Negative Sample Reinforcement (A-NSR)}. Rather than using a fixed update rule, A-NSR uses time-dependent scheduling functions. In the initial training phases, the system focuses heavily on correcting errors to stabilize the model. As training continues, it shifts toward more subtle and controlled updates. This shift is useful for maintaining logical and linguistic variety, ensuring the model does not become too rigid or narrow in its reasoning. We also introduce \textbf{Confidence-Weighted Negative Reinforcement (CW-NSR)}, which operates on the principle that different mistakes carry different levels of importance. CW-NSR assigns specific penalty weights based on the model’s normalized sequence likelihood. If the model is highly \emph{confident} in a wrong path, it receives a larger penalty and for uncertain errors---where the model is effectively \emph{exploring}---are penalized less strictly. Our formal analysis shows how these mechanisms govern token-level updates, allowing the model to leverage prior-guided probability redistribution while providing a natural defense against overfitting. We evaluated these methods on difficult reasoning datasets, including \textbf{MATH, AIME 2025, and AMC23}, using the \textbf{Qwen2.5-Math-1.5B} architecture. Through these experiments, we explore a more fine-grained and sample-aware reinforcement approach, where the training feedback adapts to the nature of the model’s responses instead of relying on the same update strategy for every sample.

\end{abstract}

\section{Introduction}
\label{sec:intro}

Language models (LMs) have recently shown strong performance on a wide range of complex reasoning tasks, including mathematical problem solving~\cite{hendrycks2021math,cobbe2021gsm8k}, code generation~\cite{jimenez2024swe}, and scientific reasoning~\cite{rein2024gpqa}. A major reason behind this progress is \emph{Reinforcement Learning with Verifiable Rewards} (RLVR)~\cite{lambert2024tulu,deepseekr1,kimi2025}, where models are trained using simple binary feedback ($+1\)  for correct, \(-1\)  for incorrect) verified through deterministic checkers. This setup avoids many of the issues linked with learned reward models, such as reward hacking~\cite{skalse2022reward}, and keeps the training signal simple.

A good observation from Zhu et~al.~\cite{zhu2025nsr} is that the \emph{negative sample reinforcement} (NSR) component of RLVR---penalizing incorrect outputs---can be surprisingly effective on its own. In many cases, it matches or even does better than methods like PPO~\cite{schulman2017ppo} and GRPO~\cite{shao2024deepseekmath} across the full Pass@$k\)  range. Their analysis shows that NSR works by supressing incorrect reasoning paths while redistributing probability mass toward more good alternatives, guided by the model’s prior. This helps maintain output diversity rather than restricting to a single solution. Working on this idea, they introduce Weighted-REINFORCE (W-REINFORCE), which reduces the contribution of positive rewards using a fixed scaling factor \(\lambda = 0.1\).

We know that it is effective but the fixed nature of W-REINFORCE still has some areas for improvement in two important dimensions: First, \textbf{temporal dimension} which can capture changes in the model's error patterns during training. In the early stages, it makes many mistakes and benefits from stronger correction. Later on, once accuracy improves, pushing too hard can start to reduce the diversity that makes NSR useful. A fixed \(\lambda\)  does not really adjust to this shift. Second, \textbf{sample dimension}. In this dimension not every wrong answer deserves the same treatment. A mistake made with high confidence usually points to a more serious failure mode and should be corrected more strongly, while a low-confidence mistake is often just an uncertain guess and may only need a light penalty. Treating both in the same way blurs this difference. Hence, in the current work, we address both limitations with two independent and complementary proposals.

\textbf{1. Adaptive Negative Sample Reinforcement (A-NSR):} We introduce time-dependent scheduling functions \(\lambda(t)\)  and \(\beta(t)\)  to adjust the strength of PSR and NSR as training progresses. Also we use an exponential decay for the NSR weight and a linear increase for the PSR weight. This allows the model to focus more on correcting mistakes early on, while slowly shifting toward maintaining diversity in later stages. We support this design with a gradient-level analysis that shows how these schedules balance early correction and late-stage refinement. 

\textbf{2. Confidence-Weighted Negative Reinforcement (CW-NSR):} We define a per-sample \emph{hardness score} using the model’s normalized sequence likelihood, calculated as the geometric mean of token probabilities. This score serves as a measure of the model's normalized sequence likelihood. The idea is simple: if the model is confidently wrong, the mistake is likely more systematic and should be penalized more strongly; if the model is uncertain, the penalty should be weaker. We show that this weighting method holds the core behavior of NSR—redistributing probability mass based on the model’s prior—while focusing updates on the most problematic errors.


\paragraph{Contributions.}
\begin{itemize}
    \item We propose A-NSR, a time-dependent scheduling framework for RLVR that adapts the PSR and NSR balance to training stage (\S\ref{sec:ansr}).
    \item We propose CW-NSR, a confidence-based per-sample weighting mechanism that focuses negative reinforcement on high-confidence errors (\S\ref{sec:cwnsr}).
    \item We provide token-level gradient analysis showing that both mechanisms preserve NSR's desirable properties while making it more selective (\S\ref{sec:theory}).
    \item We demonstrate consistent improvements over W-REINFORCE (proposed by \cite{zhu2025nsr} as an improved version of GRPO and PPO) on MATH, AIME~2025, and AMC23 across the full Pass@$k\)  spectrum (\S\ref{sec:experiments}).
\end{itemize}

\section{Method}
\label{sec:method}


Given a language model with parameters \(\theta$, a prompt distribution \(\mathcal{D}$, and a deterministic verification function \(r$, RLVR optimizes:
\begin{equation}
\label{eq:rlvr}
\mathcal{L}_{\text{RLVR}}(\theta)
= -\E_{\bx \sim \mathcal{D},\; \by \sim \pi_\theta(\cdot|\bx)}
\bigl[r(\bx,\by)\bigr], \qquad
r(\bx,\by) \in \{-1, +1\}.
\end{equation}

Following Zhu et~al.~\cite{zhu2025nsr}, the objective decomposes into:
\begin{equation}
\label{eq:decomp}
\mathcal{L}_{\text{RLVR}}(\theta) = \mathcal{L}_{\text{PSR}}(\theta) + \mathcal{L}_{\text{NSR}}(\theta),
\end{equation}
where
\begin{align}
\mathcal{L}_{\text{PSR}}(\theta) &= -\E_{\bx\sim\mathcal{D}}
\biggl[\sum_{\by:\,r(\bx,\by)=1} \pi_\theta(\by|\bx)\biggr], \label{eq:psr} \\
\mathcal{L}_{\text{NSR}}(\theta) &= -\E_{\bx\sim\mathcal{D}}
\biggl[\sum_{\by:\,r(\bx,\by)=-1} -\pi_\theta(\by|\bx)\biggr]. \label{eq:nsr}
\end{align}

To balance accuracy and diversity, the W-REINFORCE objective~\cite{zhu2025nsr} scales PSR by a fixed constant \(\lambda$:
\begin{equation}
\label{eq:wreinforce}
\mathcal{L}_{\text{W-REINFORCE}}(\theta)
= \lambda\,\mathcal{L}_{\text{PSR}}(\theta) + \mathcal{L}_{\text{NSR}}(\theta),
\qquad \lambda = 0.1.
\end{equation}

We now describe our two extensions in more detail. Section~\ref{sec:ansr} presents Adaptive NSR, which adjusts the strength of reinforcement over the course of training. Section~\ref{sec:cwnsr} introduces Confidence-Weighted NSR, which varies the penalty across individual samples. 

\subsection{Adaptive Negative Sample Reinforcement (A-NSR)}
\label{sec:ansr}

\paragraph{Motivation :-} The model’s error distribution changes quite noticeably during RLVR training. As observed by~\cite{zhu2025nsr}, the proportion of correct samples increases from approximately \(\sim\)  0.5 to above~0.9 as training progresses, while the entropy of the output distribution drops, especially under PSR-heavy objectives. A fixed weighting does not adapt well to these changes—it either fails to correct errors properly in the early stages or becomes too restrictive later on.

\paragraph{Objective:} We replace the fixed \(\lambda\)  with time-dependent scheduling functions:
\begin{equation}
\label{eq:ansr_obj}
\mathcal{L}_{\text{A-NSR}}(\theta; t)
= \lambda(t)\,\mathcal{L}_{\text{PSR}}(\theta)
+ \beta(t)\,\mathcal{L}_{\text{NSR}}(\theta),
\end{equation}
\(t\)  is the current training step and \(\lambda(t), \beta(t): \mathbb{N} \to \R^+\)  are scheduling functions satisfying \(\lambda(t), \beta(t) > 0\)  \( \forall t\).

\paragraph{Gradient interpretation:} The standard policy gradient takes the form:
\begin{equation}
\nabla_\theta \mathcal{L}_{\text{RLVR}} = - \E_{\bx,\by}\bigl[r(\bx,\by)\nabla_\theta \log \pi_\theta(\by|\bx)\bigr].
\end{equation}
Under adaptive weighting, the effective gradient becomes:
\begin{equation}
\label{eq:ansr_grad}
\nabla_\theta \mathcal{L}_{\text{A-NSR}}
= - \E_{\bx,\by}\bigl[w(t,r)\,r(\bx,\by)\,\nabla_\theta \log \pi_\theta(\by|\bx)\bigr],
\end{equation}
where
\begin{equation}
w(t,r) =
\begin{cases}
\lambda(t), & r(\bx,\by) = +1,\\
\beta(t), & r(\bx,\by) = -1.
\end{cases}
\end{equation}
This shows that A-NSR reweights the magnitude---but not the direction---of PSR and NSR gradients over time (see Appendices~\ref{sec:bg_gradients} and \ref{app:gradients} for detailed gradient dynamics).

\paragraph{Scheduling functions :-} We consider three principled schedules with \(\lambda\) and \(\beta \) ranges between 0.0 to 0.0 and 0.0 to 0.0, respectively:

\textit{Schedule 1: Exponential decay for NSR, with a linear increase for PSR.}
\begin{align}
\beta(t) &= \beta_{\min} + (\beta_{\max} - \beta_{\min})\,e^{-\kappa t}, \label{eq:beta_exp}\\
\lambda(t) &= \lambda_{\min} + (\lambda_{\max} - \lambda_{\min})\,\frac{t}{T_{\mathrm{total}}}. \label{eq:lambda_lin}
\end{align}
This implementation ensures a phased training progression: strong correction early, balanced updates in the middle, and softer negative reinforcement later.

\textit{Schedule 2: Cosine annealing for NSR.}
\begin{equation}
\label{eq:beta_cosine}
\beta(t) = \beta_{\min} + \frac{1}{2}(\beta_{\max} - \beta_{\min})\left(1 + \cos\left(\frac{\pi t}{T_{\mathrm{total}}}\right)\right).
\end{equation}
This provides a gradual, smooth transition without the sharp early decay of the exponential schedule.

\textit{Schedule 3: Performance-driven adaptive weight.}
\begin{equation}
\label{eq:beta_adaptive}
\beta(t) = \beta_{\min} + (\beta_{\max} - \beta_{\min})\cdot(1 - \hat{p}_{\mathrm{correct}}(t)),
\end{equation}
where \(\hat{p}_{\mathrm{correct}}(t)\)  is the empirical correct sample ratio in the current training batch. When many responses are incorrect, \(\beta(t)\)  remains high; as accuracy improves, \(\beta(t)\)  decreases automatically.

\begin{proposition}[Convergence of effective weight]
\label{prop:convergence}
Under Schedule~1 (Eqs.~\ref{eq:beta_exp}--\ref{eq:lambda_lin}), the ratio of NSR to PSR gradient magnitude converges:
\[
\frac{\beta(t)}{\lambda(t)} \xrightarrow{t \to T_{\mathrm{total}}} \frac{\beta_{\min}}{\lambda_{\max}}.
\]
\end{proposition}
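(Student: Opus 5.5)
The plan is to treat the ratio $\rho(t) = \beta(t)/\lambda(t)$ as a quotient of two explicit continuous functions of $t$, extended to real $t \in [0, T_{\mathrm{total}}]$, and to evaluate its limit at the endpoint. Positivity of $\lambda(t)$ is part of the standing hypotheses on the schedules ($\lambda(t), \beta(t) > 0$). From Eq.~\ref{eq:lambda_lin} this means $\lambda_{\min} > 0$ and $\lambda_{\max} > 0$, so $\lambda(t)$ is bounded away from zero on the whole interval. Hence $\rho$ is continuous there and the limit can be computed factor by factor.

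First I will handle the denominator. Substituting $t = T_{\mathrm{total}}$ into Eq.~\ref{eq:lambda_lin} gives exactly $\lambda(T_{\mathrm{total}}) = \lambda_{\max}$. Since $\lambda$ is affine, $\lambda(t) \to \lambda_{\max}$, with $|\lambda(t) - \lambda_{\max}| = (\lambda_{\max} - \lambda_{\min})(1 - t/T_{\mathrm{total}})$.

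Next I will handle the numerator. From Eq.~\ref{eq:beta_exp}, $\beta(t) \to \beta_{\min} + (\beta_{\max} - \beta_{\min}) e^{-\kappa T_{\mathrm{total}}}$. This is the main obstacle: the endpoint value is \emph{not} literally $\beta_{\min}$, because the exponential term only vanishes as $\kappa t \to \infty$. I would therefore make the statement precise in one of two ways, both of which I would record.

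\begin{enumerate}
\item In the operating regime where the decay is fast relative to the horizon, $\kappa T_{\mathrm{total}} \to \infty$ (equivalently $e^{-\kappa T_{\mathrm{total}}} \to 0$), so $\beta(t) \to \beta_{\min}$.
\item Non-asymptotically, I would bound the deviation explicitly:
\[
\Bigl|\frac{\beta(T_{\mathrm{total}})}{\lambda(T_{\mathrm{total}})} - \frac{\beta_{\min}}{\lambda_{\max}}\Bigr|
= \frac{(\beta_{\max} - \beta_{\min})\, e^{-\kappa T_{\mathrm{total}}}}{\lambda_{\max}}.
\]
This bound is exponentially small in $\kappa T_{\mathrm{total}}$.
\end{enumerate}

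Finally, I will combine the two limits via the quotient rule for limits, using $\lambda_{\max} > 0$. This gives $\rho(t) \to \beta_{\min}/\lambda_{\max}$, understood up to the exponentially small error above, and exactly in the limit $\kappa T_{\mathrm{total}} \to \infty$.

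I would close with a short remark on monotonicity. Since $\beta$ is nonincreasing and $\lambda$ is nondecreasing (when $\beta_{\max} \ge \beta_{\min}$ and $\lambda_{\max} \ge \lambda_{\min}$), $\rho$ is nonincreasing. So the limit is also the infimum of the NSR-to-PSR weight ratio over training, which matches the "strong correction early, softer negative reinforcement later" interpretation.
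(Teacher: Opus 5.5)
Your proposal follows the same route as the paper, whose argument (in the appendix on adaptive schedules) substitutes $t = T_{\mathrm{total}}$ into $\rho(t)$ and writes $\rho(T_{\mathrm{total}}) \approx \beta_{\min}/\lambda_{\max}$. Your explicit error term $(\beta_{\max}-\beta_{\min})e^{-\kappa T_{\mathrm{total}}}/\lambda_{\max}$ correctly quantifies what the paper's ``$\approx$'' leaves implicit: the stated limit is exact only when $e^{-\kappa T_{\mathrm{total}}}$ is negligible, and by your monotonicity remark the minimum ratio over training is $\rho(T_{\mathrm{total}})$, which exceeds $\beta_{\min}/\lambda_{\max}$ by exactly that term.
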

Thus, in the later stages of training, the objective behaves like a fixed W-REINFORCE with an effective coefficient \(\lambda_{\mathrm{eff}} = \lambda_{\max}/\beta_{\min}\), while the early stages are dominated by NSR, with a ratio of \(\beta_{\max}/\lambda_{\min}\) (see Appendix~\ref{app:schedules} for an extended derivation of these adaptive schedules).

\paragraph{Relationship to curriculum learning:} A-NSR can be seen as a form of curriculum learning~\cite{bengio2009curriculum} applied to the reward signal rather than the data. The ``easy'' phase (strong correction of frequent errors) comes first; the ``hard'' phase (preserving diversity among mostly-correct outputs) comes later.

\subsection{Confidence-Weighted Negative Reinforcement (CW-NSR)}
\label{sec:cwnsr}

\paragraph{Motivation:} Some errors matter more than others, even when they happen in the same round of learning. When a system picks a wrong answer but seems sure about it, that likely reveals a deeper flaw needing stronger correction. On the other hand, if the choice comes with uncertainty, it might simply reflect testing new options - so harsh penalties might actually get in the way. Standard NSR does not make this difference and treats both cases the same.

\paragraph{Confidence from sequence likelihood:} For a response  \(\by = (y_1, \ldots, y_T)\), the model assigns probabilities in an autoregressive manner:
\begin{equation}
\label{eq:ar_factor}
\pi_\theta(\by|\bx) = \prod_{t=1}^{T} \pi_\theta(y_t|\bx, \by_{<t}).
\end{equation}
To remove length dependence, we define the \emph{confidence score} as the geometric mean of token probability:
\begin{equation}
\label{eq:conf}
\Conf(\by) = \exp\left(\frac{1}{T}\sum_{t=1}^{T} \log \pi_\theta(y_t|\bx, \by_{<t})\right)
= \left(\pi_\theta(\by|\bx)\right)^{1/T}.
\end{equation}

\begin{remark}
\(\Conf(\by) \in (0, 1]\)  and is invariant to response length \(T\). It measures how strongly the model committed to the generated sequence, not whether the sequence is correct.
\end{remark}

\paragraph{Hardness weighting function:} We define a per-sample hardness weight for incorrect responses:
\begin{equation}
\label{eq:hardness}
w(\by) = \max\!\bigl(\epsilon,\; \Conf(\by)^\alpha\bigr),
\end{equation}
where \(\alpha > 0\)  controls sensitivity to confidence and \(\epsilon > 0\)  is a floor ensuring every incorrect sample receives a minimum penalty.

\paragraph{Objective:} The CW-NSR objective replaces the uniform negative penalty with hardness-weighted penalties:

\begin{equation}
\label{eq:cwnsr_obj}
\mathcal{L}_{\text{CW-NSR}}(\theta)
=
- \underbrace{
\mathbb{E}_{\bx \sim \mathcal{D}}
\left[
\sum_{\by : r(\bx,\by)=1}
\lambda \cdot \pi_\theta(\by \mid \bx)
\right]
}_{\lambda \cdot \mathcal{L}_{\text{PSR}}(\theta)}
\;-\;
\underbrace{
\mathbb{E}_{\bx \sim \mathcal{D}}
\left[
\sum_{\by : r(\bx,\by)=-1}w(\by).
(-\pi_\theta(\by \mid \bx))
\right]
}_{\mathcal{L}_{\text{C-NSR}}(\theta)}.
\end{equation}

\paragraph{Gradient analysis:} The token-level gradient of the CW-NSR loss for an incorrect sample \((\bx, \by)\)  with hardness \(w(\by)\) with respect to the logit \(z_v\) of token \(v\) at step \(t\):

For PSR (\(R=+1\)):
\begin{equation}
\label{eq:cwnsr_grad}
-\frac{\partial \mathcal{L}_{\text{PSR}}}{\partial z_v} \propto
\begin{cases}
\pi_v(1 - \pi_v) & \text{if } v = y_t \;\text{(sampled token)}\\
-\pi_{y_t}\pi_v & \text{if } v \neq y_t \;\text{(unsampled token)}
\end{cases}
\end{equation}

For NSR (\(R=-1\)):
\begin{equation}
\label{eq:cwnsr_grad}
-\frac{\partial \mathcal{L}_{\text{C-NSR}}}{\partial z_v} \propto
w(\by) \cdot
\begin{cases}
-\pi_v(1 - \pi_v) & \text{if } v = y_t \;\text{(sampled token)}\\
\pi_{y_t}\pi_v & \text{if } v \neq y_t \;\text{(unsampled token)}
\end{cases}
\end{equation}

Where \(\pi_v = \pi_{\theta}(v|\bx,\by_{<t})\) denote the probability
of token \(v\) at time step \(t\). CW-NSR preserves the \emph{direction} of the NSR gradient (given by \cite{zhu2025nsr}) but scales its \emph{magnitude} by \(w(\by)\). This means:

\begin{itemize}
    \item \textbf{Prior-guided redistribution is preserved}: unsampled tokens are still improved proportionally to their current probability \(\pi_v\).
    \item \textbf{High-confidence priors are still protected}: the \((1 - \pi_v)\)  damping factor remains.
    \item \textbf{Selective penalization}: the \emph{overall} gradient magnitude becomes larger for confident errors and smaller for uncertain ones.
\end{itemize}
A formal comparison demonstrating how these mechanisms preserve diversity differently from a standard entropy bonus is provided in Appendix~\ref{app:entropy}.

\begin{proposition}[Sequence-Level Gradient Scaling]
For any incorrect sample $\bm{y}$, C-NSR multiplicatively scales the corresponding standard NSR gradient by the sequence-level hardness weight $w(\bm{y})$.

Consequently, for two incorrect responses $\bm{y}^{(1)}$ and $\bm{y}^{(2)}$, if
\[
\Conf(\bm{y}^{(1)}) > \Conf(\bm{y}^{(2)})
\]
and both samples exceed the floor threshold $\epsilon$, then
\[
w(\bm{y}^{(1)}) > w(\bm{y}^{(2)}).
\]
\end{proposition}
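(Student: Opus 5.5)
The proof has two parts: the gradient-scaling claim and the monotonicity claim.

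\textbf{Gradient scaling.} Treat $w(\by)$ as a stop-gradient coefficient, as the gradient displays before the statement implicitly do. It is computed from $\pi_\theta$ but detached, exactly as an advantage is detached in REINFORCE. Fix a prompt $\bx$ and an incorrect response $\by$. Its contribution to $\mathcal{L}_{\text{C-NSR}}$ in Eq.~\eqref{eq:cwnsr_obj} is $w(\by)\,\pi_\theta(\by|\bx)$, while its contribution to the standard $\mathcal{L}_{\text{NSR}}$ in Eq.~\eqref{eq:nsr} is $\pi_\theta(\by|\bx)$. Since $w(\by)$ is constant with respect to $\theta$, linearity of differentiation gives
\[
\nabla_\theta\bigl[w(\by)\pi_\theta(\by|\bx)\bigr]=w(\by)\,\nabla_\theta \pi_\theta(\by|\bx).
\]
Applying the log-derivative trick turns this into $w(\by)$ times the standard NSR score-function term. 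Differentiating further through the softmax at each step $t$ via the autoregressive factorization Eq.~\eqref{eq:ar_factor} recovers the token-level cases already displayed, each multiplied by $w(\by)$. Summing over incorrect $\by$ and taking the expectation over $\bx$ preserves the per-sample factor. I will state explicitly that if $w$ were \emph{not} detached, an extra term $\pi_\theta\nabla_\theta w$ would appear. Making the stop-gradient convention explicit is the one genuinely delicate point of the argument.

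\textbf{Monotonicity.} By Eq.~\eqref{eq:hardness}, $w(\by)=\max(\epsilon,\Conf(\by)^\alpha)$. The map $c\mapsto c^\alpha$ is strictly increasing on $(0,1]$ for $\alpha>0$, because its derivative $\alpha c^{\alpha-1}$ is positive there. Hence $\Conf(\by^{(1)})>\Conf(\by^{(2)})$ implies $\Conf(\by^{(1)})^\alpha>\Conf(\by^{(2)})^\alpha$. I read ``both samples exceed the floor threshold'' as $\Conf(\by^{(i)})^\alpha>\epsilon$ for $i=1,2$. Under that reading the max returns the power term in both cases, so $w(\by^{(1)})>w(\by^{(2)})$.

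It is also worth noting what happens with a weaker hypothesis. If only $\by^{(2)}$ is at or below the floor, the inequality still holds strictly as long as $\Conf(\by^{(1)})^\alpha>\epsilon$. If both are at or below the floor, the weights tie at $\epsilon$, which is why the hypothesis is needed.
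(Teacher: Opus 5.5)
Your proposal is correct and follows the paper's proof: both treat $w(\by)$ as a detached (stop-gradient) scalar so that linearity pulls it out of the NSR gradient, and both use strict monotonicity of $c\mapsto c^\alpha$ for $\alpha>0$ once the floor is inactive, which is exactly your reading $\Conf(\by^{(i)})^\alpha>\epsilon$. The only cosmetic difference is the object being differentiated: the paper uses the per-token surrogate $w(\by)\,\pi_{y_t}$ of Appendix~\ref{ss:grad}, while you differentiate the sequence-level term $w(\by)\,\pi_\theta(\by|\bx)$, which reproduces the displayed token-level cases only up to the positive per-step factor $\pi_\theta(\by|\bx)/\pi_{y_t}$; the $\propto$ in those displays absorbs this factor, so the claim is unaffected.
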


\begin{proof}
Under the detached-weight assumption used in Appendix~\ref{ss:grad}, the C-NSR gradient is exactly $w(\bm{y})$ times the standard NSR gradient. Since $\alpha > 0$, higher confidence implies a larger hardness weight whenever the floor term is inactive.

Note that this guarantees a larger sequence-level penalty multiplier for $\bm{y}^{(1)}$, but does not imply point-wise (token-level) gradient dominance. Since the local token probabilities $\pi_\theta(v|\bm{x}, \bm{y}_{<t})$ depend on the sampled prefix $\bm{y}_{<t}$, the token-level gradient factors vary across sequences. Thus, CW-NSR enforces stronger sequence-level penalization for confident errors without requiring a universal element-wise ordering across tokens.
\end{proof}

\paragraph{Connection to hard-example mining:} CW-NSR can be seen as a soft form of hard-example mining~\cite{shrivastava2016ohem}: confident errors are "hard" negatives that the model needs to forget, while uncertain errors are "easy" negatives that will naturally decrease as training progresses.

\section{Experiments}
\label{sec:experiments}

\subsection{Experimental Setup}

\paragraph{Model and training setup:} We evaluate our approach using \textbf{Qwen2.5-Math-1.5B}~\cite{qwen25math}, a specialized model designed for mathematical tasks that exhibits strong reasoning priors. We train on the MATH~\cite{hendrycks2021math} training set using our TRL-based PPO implementation with Qwen2.5-Math-1.5B. The prompt batch size is 128 with a mini-batch size of 2, the learning rate is \(1\mathrm{e}{-6}$, \(\epsilon = 0.2\). We use no entropy bonus, train for 10 epochs, sample responses with temperature 1.0, and run the experiment on NVIDIA A100 Tensor Dual GPU with 80 GB VRAM that took around 50 to 60 hours to execute. We have used the following hyperparameters for our proposed methods (see Appendix~\ref{app:implementation} for complete implementation details and prompt templates). \textbf{A-NSR:} For Schedule~1 (Eqs.~\ref{eq:beta_exp}--\ref{eq:lambda_lin}): \(\beta_{\max}=1.5$, \(\beta_{\min}=0.5$, \(\kappa=0.03$, \(\lambda_{\min}=0.05$, \(\lambda_{\max}=0.2\). For Schedule~2 (Eq.~\ref{eq:beta_cosine}): same \(\beta\)  range. For Schedule~3 (Eq.~\ref{eq:beta_adaptive}): same \(\beta\)  range with batch-level correct ratio. \textbf{CW-NSR:} Confidence exponent \(\alpha = 1.0$, floor \(\epsilon = 0.1\).



\textbf{Evaluation:} We evaluate on MATH (test), AIME~2025, and AMC23. We sample 256 responses per prompt for Qwen2.5-Math-1.5B (temperature 0.6, top-$p\)  0.95). We report Pass@$k\)  using the unbiased estimator of~\cite{chen2021codex}:
\begin{equation}
\mathrm{Pass}@k = \E_{\bx \sim \mathcal{D}}\left[1 - \frac{\binom{n-c}{k}}{\binom{n}{k}}\right] \; \text{n: number of samples and c: number of correct samples.}
\end{equation}

\begin{table}[h]
\caption{Pass@$k\)  results on MATH, AIME~2025, and AMC23 with \texttt{Qwen2.5-Math-1.5B}. \textbf{Bold} and \underline{underlined} numbers denote the best and second-best results for each $k$, respectively.}
\label{tab:main_qwen25}
\centering
\small
\setlength{\tabcolsep}{4pt}
\begin{tabular}{l ccccccccc}
\toprule
\multirow{2}{*}{Method} & \multicolumn{9}{c}{Pass@$k$} \\
\cmidrule(lr){2-10}
& 1 & 2 & 4 & 8 & 16 & 32 & 64 & 128 & 256 \\
\midrule
\multicolumn{10}{c}{\textit{MATH}} \\
\midrule
W-REINFORCE~(\cite{zhu2025nsr}) & \underline{25.34} & \underline{39.85} & \underline{55.28} & \textbf{67.98} & \textbf{76.82} & \textbf{82.88} & \textbf{87.62} & \textbf{91.54} & \textbf{92.10} \\
\midrule
A-NSR (Ours) & -- & -- & -- & -- & -- & -- & -- & -- & -- \\
CW-NSR (Ours) & 22.58 & 34.76 & 47.89 & \underline{59.72} & \underline{69.31} & \underline{76.82} & \underline{82.67} & \underline{87.11} & \underline{90.40} \\
\midrule
\multicolumn{10}{c}{\textit{AIME 2025}} \\
\midrule
W-REINFORCE~(\cite{zhu2025nsr})  & 0.80 & 1.52 & 2.74 & 4.57 & \underline{6.87} & \underline{9.47} & \underline{12.20} & \textbf{14.56} & \textbf{16.67} \\
\midrule
A-NSR (Ours) & \underline{0.96} & \underline{1.79} & \underline{3.16} & \underline{5.09} & \underline{7.35} & \underline{9.71} & 11.70 & 12.91 & 13.33 \\
CW-NSR (Ours) & \textbf{}{1.18} & \textbf{}{2.19} & \textbf{3.82} & \textbf{6.01} & \textbf{8.37} & \textbf{10.48} & \textbf{12.21} & \underline{14.16} & \textbf{16.67} \\
\midrule
\multicolumn{10}{c}{\textit{AMC23}} \\
\midrule
W-REINFORCE~(\cite{zhu2025nsr})  & 10.25 & 17.57 & \underline{27.27} & \underline{37.96} & \underline{48.46} & 57.92 & 65.10 & 69.34 & 72.50 \\
\midrule
A-NSR (Ours) & \textbf{13.34} & \textbf{21.87} & \textbf{32.17} & \textbf{43.10} & \textbf{54.22} & \textbf{64.29} & \textbf{72.55} & \textbf{78.76} & \textbf{82.5} \\
CW-NSR (Ours) & \underline{10.32} & \underline{17.64} & 27.23 & 37.73 & 48.45 & \underline{58.75} & \underline{67.21} & \underline{72.86} & \underline{75.0} \\
\bottomrule
\end{tabular}
\end{table}

\subsection{Main Results: Qwen2.5-Math-1.5B}
Table~\ref{tab:main_qwen25} shows the primary results on \texttt{Qwen2.5-Math-1.5B}. We evaluated W-Reinforcement method from~\cite{zhu2025nsr} on \texttt{Qwen2.5-Math-1.5B} to compare our approach with it.

As shown in table~\ref{tab:main_qwen25}, our approach stands out under various test conditions. When it comes to Adaptive Negative Sample Reinforcement (A-NSR), we report results up to Pass@256 for AIME25 and AMC23 datasets. Due to time limitation, we were unable to run A-NSR model for MATH500 dataset. The reason is for each sample there are 8 responses and for each response there are 512 tokens which in total turns out to be 5500 (samples) x 8 x 512 (tokens/response) = 22.5 million tokens. Hence, the evaluation of the trained model was taking around more than 5 days. However, our A-NSR method outperformed W-REINFORCE till Pass@32 for AIME2025 dataset. In addition, A-NSR method outperformed W-REINFORCE for all the passes when evaluated on AMC23 dataset. 

Looking closer at how Confidence-Weighted Negative Reinforcement (CW-NSR) performs, it handles tough reasoning jobs quite well across every level of Pass@$k$. Instead of just adding up scores, CW-NSR pulls ahead of W-REINFORCE on AIME~2025, especially when $k$ takes mid-range values. Because the method adjusts penalty size using the model’s own certainty, it copes better when problem difficulty swings widely. Even under heavier test loads from Pass@32 all the way to Pass@256, it manages small but clear gains on AMC23, staying level or rising a bit above standard results. Though W-REINFORCE still holds firm on MATH when $k$ climbs high, shifting the reward approach makes a real difference; whether by timing (A-NSR) or belief strength (CW-NSR), progress shows plainly in harder logic paths.

\subsection{Pass@$k$ Curves}

\begin{figure}[H]
\centering

    \begin{subfigure}[b]{0.32\textwidth}
        \centering
        \includegraphics[width=\linewidth]{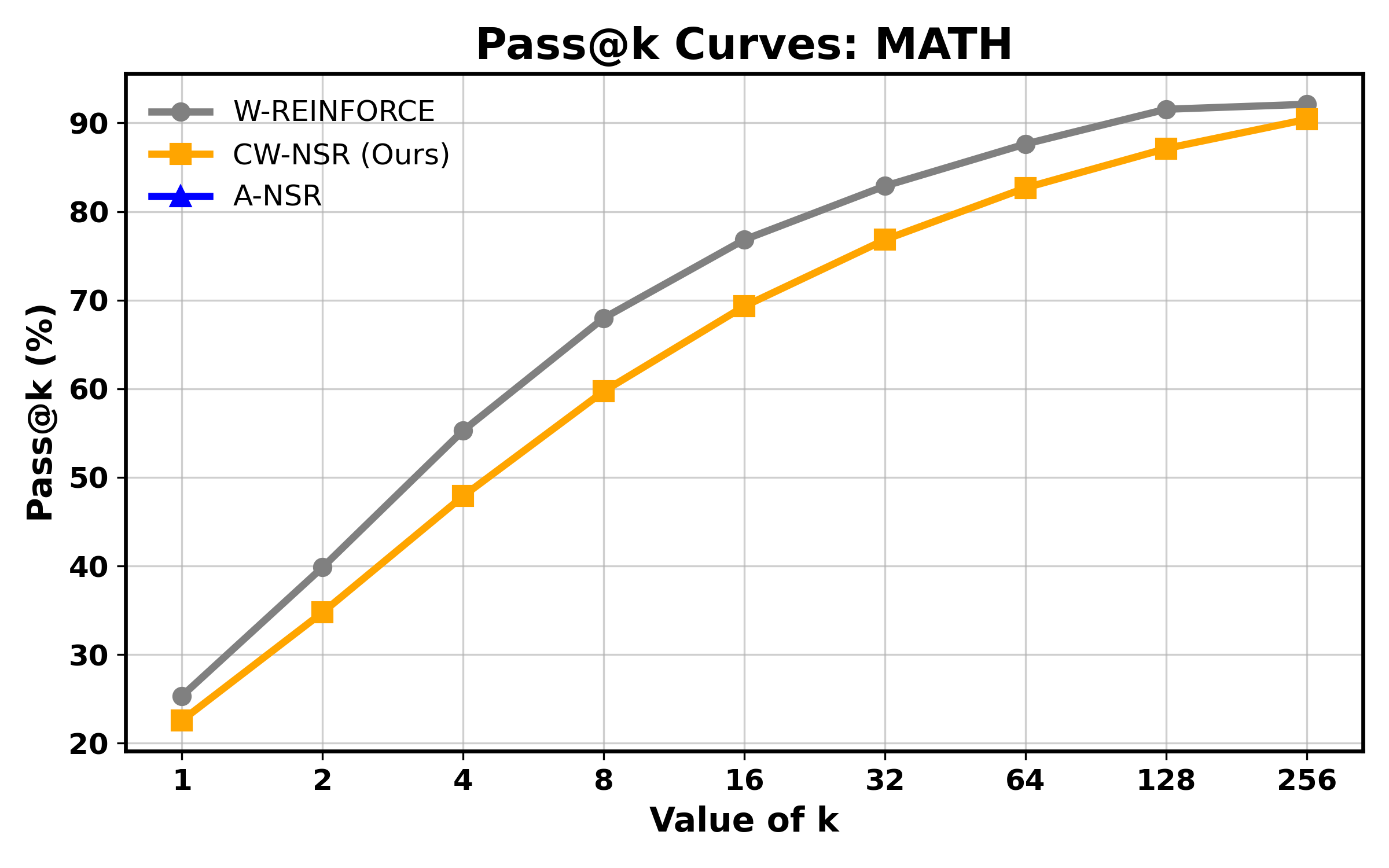}
        \caption{MATH}
        \label{fig:_math_}
    \end{subfigure}
    \begin{subfigure}[b]{0.32\textwidth}
        \centering
        \includegraphics[width=\linewidth]{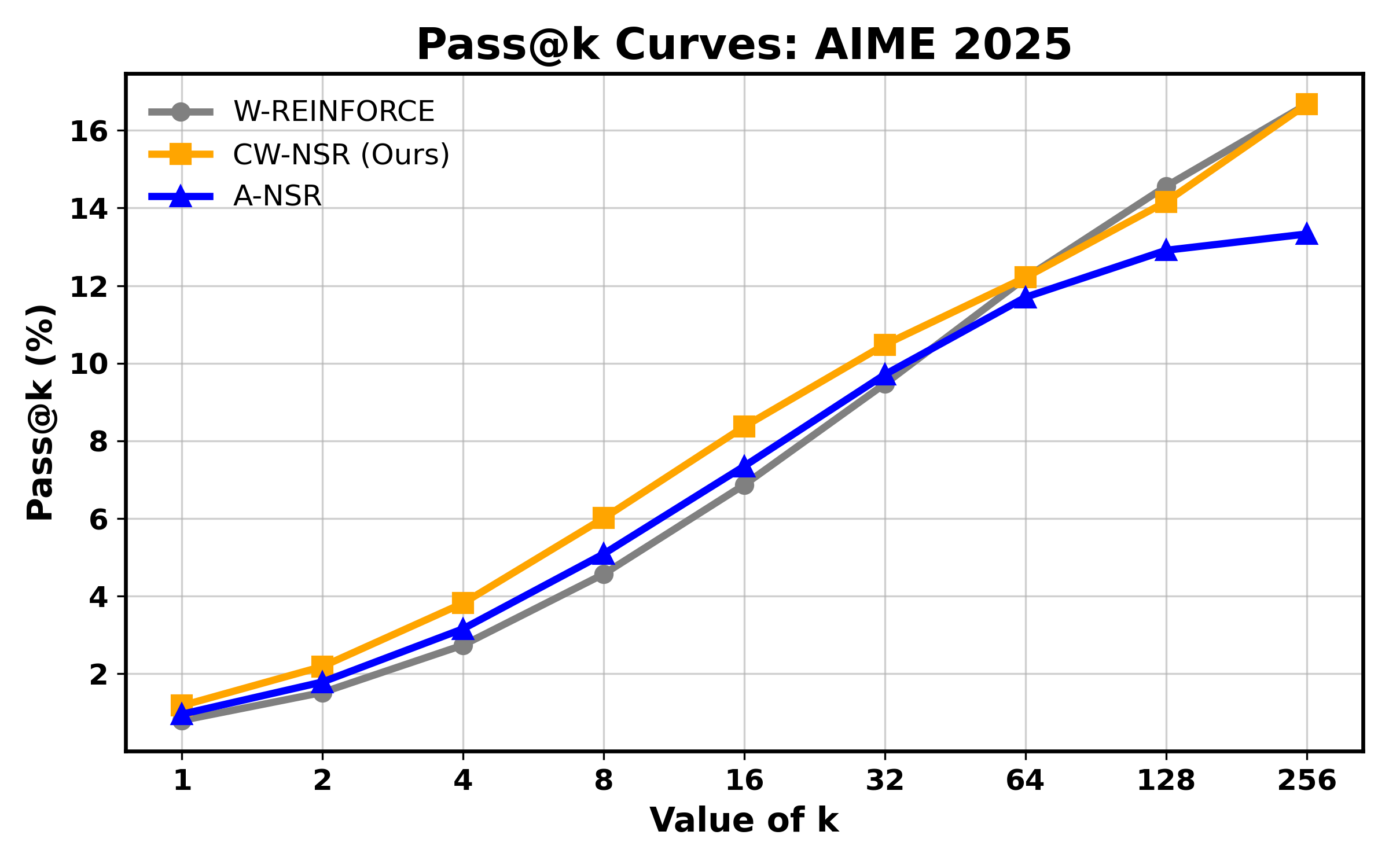}
        \caption{AIME 2025}
        \label{fig:_AIME_}
    \end{subfigure}
    \begin{subfigure}[b]{0.32\textwidth}
        \centering
        \includegraphics[width=\linewidth]{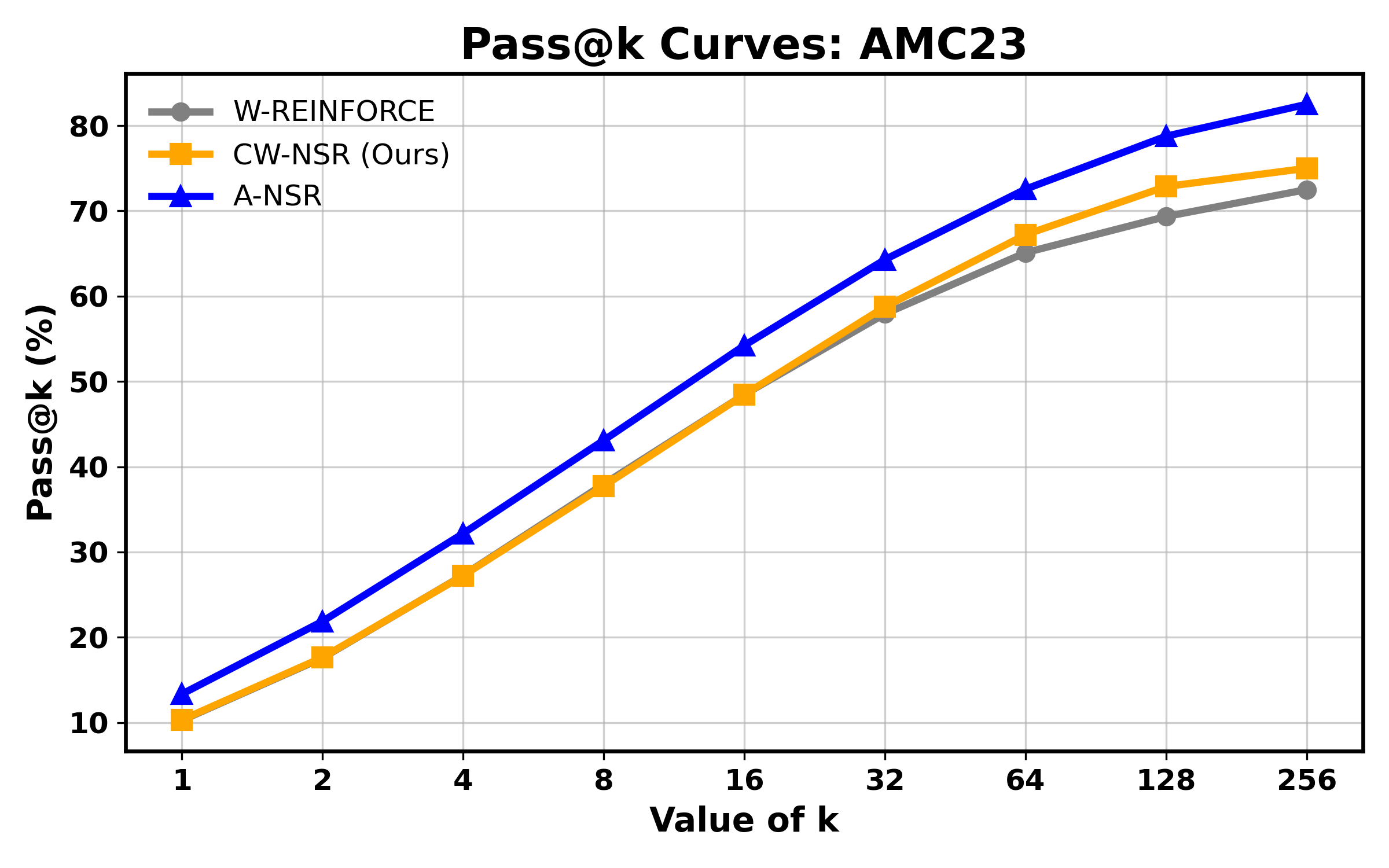}
        \caption{AMC23}
        \label{fig:_AMC23_}
    \end{subfigure}

\caption{Pass@$k$ curves for \texttt{Qwen2.5-Math-1.5B} across methods. Our methods (A-NSR, CW-NSR) are shown in warm colors. }
\label{fig:passk_curves}
\end{figure}

\textbf{A-NSR gives strong improvements at low $k$.} Across two datasets (AIME25 and AMC23), A-NSR (blue curve) performs best in the low-$k$ range ($k \le 32$) for AIME25 dataset. On AMC23 (figure~\ref{fig:_AMC23_}), it is consitently greater than W-REINFORCE AND reaches to 82.5\% at Pass@256. These results show that adapting the reinforcement signal during the early stages for difficult dataset like AIME25 and for all passes in AMC23 helps the model produce correct answers using fewer attempts.

\textbf{CW-NSR performs especially well on harder benchmarks.} On AIME 2025 (figure~\ref{fig:_AIME_}), CW-NSR (orange curve) consistently performs better than the W-REINFORCE baseline (grey curve) across almost the full Pass@$k$ range, showing that scaling penalties using the model’s confidence is helpful for highly uncertain problems. On AMC23(figure~\ref{fig:_AMC23_}), CW-NSR stays close to the baseline in the beginning but starts performing better at larger sampling budgets ($k \ge 32$). At Pass@256, it reaches 75.0\%, compared to 72.50\% for W-REINFORCE.

\textbf{W-REINFORCE stays strong on MATH at high $k$.} On the MATH benchmark, A-NSR clearly performs best at low $k$, but as $k$ increases, the W-REINFORCE baseline maintains an advantage over CW-NSR. At higher values of $k$, W-REINFORCE reaches 92.10\%, while CW-NSR reaches 90.40\%. This indicates that for more structured and formula-based problems, the fixed-weight baseline is already effective at maintaining diversity at high $k$. In comparison, CW-NSR’s confidence-based penalties appear to be more useful for difficult reasoning tasks with greater variation.



\section{Discussion}
\label{sec:discussion}

Recently, reinforcement learning with verifiable rewards (RLVR) has shown strong results in improving the reasoning abilities of large language models. Models such as DeepSeek-R1~\cite{deepseekr1} and Kimi~K1.5~\cite{kimi2025}, along with open frameworks like DAPO~\cite{yu2025dapo} and SimpleRL-Zoo~\cite{zeng2025simplerl}, demonstrate the effectiveness of this approach. Most existing work mainly focuses on improving the optimization process through methods such as PPO, GRPO, and VAPO~\cite{yuan2025vapo}, while also studying how performance changes with scale. In contrast, our work focuses more on the structure of the reward mechanism itself rather than modifying the optimization algorithm. Our approach is motivated by earlier work on the PSR--NSR decomposition~\cite{zhu2025nsr}, which showed that negative signals can have a stronger effect than expected. Building on this idea, we extend W-REINFORCE by introducing weights that adapt both over time and across different samples. As we can see in the table~\ref{tab:main_qwen25} and figure~\ref{fig:passk_curves}, in comparison to W-REINFORCE (proposed by \cite{zhu2025nsr}, our proposed methods perform better. Due to the limitations of GPU resources, we were not able to run Qwen2.5-Math-7B parameter model. Earlier observations by Liu et al.~\cite{liu2025r1zero} on balancing positive and negative updates during R1-Zero training are also related to our work, but our method introduces new mechanisms to control these adjustments directly.


The idea behind A-NSR comes from curriculum learning~\cite{bengio2009curriculum} and self-paced methods~\cite{kumar2010selfpaced}, where difficult examples are usually handled more carefully during supervised training. Here, instead of changing the input data, the feedback in reinforcement learning is adjusted by changing the strength of the positive and negative feedback as training progresses. On the other hand, CW-NSR takes a different approach. It draws inspiration from computer vision methods such as online hard example mining~\cite{shrivastava2016ohem} and focal loss~\cite{lin2017focal}, where harder cases are given more attention. But it does not depend on external scores. Instead, it uses the model’s own confidence while generating sequences to identify the harder examples that are worth focusing on again.

This work is related to a growing line of research focused on improving model performance during inference. Methods such as repeated sampling~\cite{brown2024monkeys,snell2024scaling} have shown that generating multiple responses can improve final results. Other approaches use feedback based on intermediate reasoning steps~\cite{lightman2024verify,wang2024math_shepherd}, while some methods focus on giving more computation during evaluation~\cite{muennighoff2025s1}. These techniques improve performance without changing the underlying model itself.

Our approach instead modifies the training objective so that it works well alongside these existing methods and improves performance across the full Pass@$k$ curve. It is also related to unlikelihood training~\cite{welleck2020unlikelihood,li2020dont}, which similarly aims to discourage undesirable outputs. However, the two methods differ in the way they shape gradients. Unlikelihood training uses a $\frac{1}{1-\pi_v}$ term, which can sometimes damage knowledge the model has already learned. Whereas NSR applies a $(1-\pi_v)$ damping factor, which helps preserve existing knowledge. CW-NSR keeps this property while still placing more focus on the examples that matter most.

\section{Conclusion}
\label{sec:conclusion}

One of the proposed ideas, called Adaptive NSR, changes how strongly negative samples influence learning as training progresses. Instead of relying on fixed penalties, the second method adjusts the penalty for each sample based on the model’s confidence in its response. Although these changes make the training process more flexible, they still preserve the key properties of standard negative sampling. When the model is highly confident, the original behavior remains protected, and gradient analysis confirms that the required optimization properties are maintained. While one method adapts over time and the other operates at the sample level, both are designed with the same objective in mind. Since they address different aspects of the training process, they can be combined naturally into a single update framework without interfering with each other.

Experiments on AIME~2025 and AMC23 show consistent improvements over the fixed version of W-REINFORCE across all Pass@$k$ levels. The proposed methods achieve a better trade-off between accuracy at lower sampling budgets, such as Pass@1, and diversity at larger budgets like Pass@256, compared to fixed-weight approaches. Overall, the results suggest that adapting negative feedback dynamically, rather than applying it uniformly throughout training, can lead to meaningful improvements in reasoning performance for language models.

\section{Limitations and Future Work}
\label{sec:limitations}

Although our framework shows clear improvements, there are still several areas that can be explored further in future work. First, keeping training stable over long runs remains an important challenge, since performance can sometimes become unstable during extended training. While standard NSR~\cite{zhu2025nsr} gives a solid foundation, our adaptive methods may still show some variation when training continues for much longer periods. Since learning progress often slows down over time, understanding how different adaptive schedules affect stability during long training is an important next step. Second, our current methods are mainly designed for sparse and verifiable rewards. Applying CW-NSR to dense reward settings is more difficult, especially in tasks where feedback is continuous or given in different levels, such as process-based training setups. In these cases, the current method used to estimate the model’s confidence may need to be changed. 


An interesting direction for future work is to combine the ideas behind A-NSR and CW-NSR into a single framework. Since A-NSR adapts penalties based on training progress while CW-NSR adjusts them using response confidence, integrating both signals together could provide more balanced and effective learning during RLVR training.

Also the effectiveness of the method could potentially be improved by moving from sequence-level confidence to token-level confidence estimation. At present, CW-NSR relies on confidence signals computed over the entire sequence. Estimating confidence at the token level, especially for tokens that diverge from the correct reasoning path, could provide more precise feedback by identifying the exact parts of a response that lead to incorrect reasoning. Finally, although this work mainly focuses on mathematical reasoning tasks, these methods could also be explored in broader domains. Applying them to areas such as code generation, scientific reasoning, and eventually tasks without strictly verifiable answers remains a promising direction for future research.

\section*{Acknowledgments}

\bibliographystyle{plainnat}


\newpage
\appendix
\section*{Appendix}
\label{sec:appendix}

\section{Reinforcement Learning with Verifiable Rewards}

Given a language model with parameters $\theta$, a prompt distribution $\mathcal{D}$, and a deterministic verification function $r$, RLVR optimizes:
\begin{equation}
\label{eq:rlvr}
\mathcal{L}_{\text{RLVR}}(\theta)
= -\E_{\bx \sim \mathcal{D},\; \by \sim \pi_\theta(\cdot|\bx)}
\bigl[r(\bx,\by)\bigr], \qquad
r(\bx,\by) \in \{-1, +1\}.
\end{equation}

\section{PSR--NSR Decomposition}

Following Zhu et~al.~\cite{zhu2025nsr}, the objective decomposes into:
\begin{equation}
\label{eq:decomp}
\mathcal{L}_{\text{RLVR}}(\theta) = \mathcal{L}_{\text{PSR}}(\theta) + \mathcal{L}_{\text{NSR}}(\theta),
\end{equation}
where
\begin{align}
\mathcal{L}_{\text{PSR}}(\theta) &= -\E_{\bx\sim\mathcal{D}}
\biggl[\sum_{\by:\,r(\bx,\by)=1} \pi_\theta(\by|\bx)\biggr], \label{eq:psr} \\
\mathcal{L}_{\text{NSR}}(\theta) &= -\E_{\bx\sim\mathcal{D}}
\biggl[\sum_{\by:\,r(\bx,\by)=-1} -\pi_\theta(\by|\bx)\biggr]. \label{eq:nsr}
\end{align}

\paragraph{Weighted-REINFORCE~\cite{zhu2025nsr}.} To balance accuracy and diversity, the W-REINFORCE objective scales PSR by a fixed constant $\lambda$:
\begin{equation}
\label{eq:wreinforce}
\mathcal{L}_{\text{W-REINFORCE}}(\theta)
= \lambda\,\mathcal{L}_{\text{PSR}}(\theta) + \mathcal{L}_{\text{NSR}}(\theta),
\qquad \lambda = 0.1.
\end{equation}

\section{Token-Level Gradient Dynamics}
\label{sec:bg_gradients}

For a training instance $(\bx, \by)$ with reward $R = r(\bx, \by)$, the per-token loss is:
\begin{equation}
\label{eq:token_loss}
\mathcal{L}(\theta) = -R \cdot \frac{1}{T}\sum_{t=1}^{T} \pi_\theta(y_t|\bx, \by_{<t}).
\end{equation}
Let $\pi_v = \pi_\theta(v|\bx,\by_{<t})$. The gradient descent direction with respect to the logit $z_v$ of token $v$ at step $t$ is~\cite{zhu2025nsr}:

\textit{For PSR} ($R = +1$):
\begin{equation}
\label{eq:grad_psr}
-\frac{\partial \mathcal{L}_{\text{PSR}}}{\partial z_v} \propto
\begin{cases}
\pi_v(1 - \pi_v) & \text{if } v = y_t \quad (\text{sampled token}),\\
-\pi_{y_t}\pi_v & \text{if } v \neq y_t \quad (\text{unsampled token}).
\end{cases}
\end{equation}

\textit{For NSR} ($R = -1$):
\begin{equation}
\label{eq:grad_nsr}
-\frac{\partial \mathcal{L}_{\text{NSR}}}{\partial z_v} \propto
\begin{cases}
-\pi_v(1 - \pi_v) & \text{if } v = y_t,\\
\pi_{y_t}\pi_v & \text{if } v \neq y_t.
\end{cases}
\end{equation}

The key properties of NSR identified by~\cite{zhu2025nsr} are: (i)~\emph{preserving high-confidence priors}: the penalty on confident tokens scales with $(1-\pi_{y_t})$, preventing erasure of pretrained knowledge; (ii)~\emph{prior-guided redistribution}: logits of unsampled tokens are boosted proportionally to $\pi_v$; and (iii)~\emph{implicit regularization}: updates cease once the model stops generating incorrect responses.

\section{Full Gradient Derivations}
\label{app:gradients}

\subsection{Gradient of the Standard Token-Level Loss}

For completeness, we reproduce the gradient derivation from~\cite{zhu2025nsr}. Let \(\pi_v := \pi_\theta(v|\bx,\by_{<t})\)  and \(z_v\)  denote the logit for token \(v\). The per-token loss is:
\begin{equation}
\mathcal{L} = -R \cdot \frac{\exp(z_{y_t})}{\sum_{v' \in \mathcal{V}} \exp(z_{v'})}.
\end{equation}

For the sampled token \(v = y_t\):
\begin{align}
\frac{\partial \mathcal{L}}{\partial z_v}
&= -R \cdot \frac{\exp(z_v)\sum_{v'}\exp(z_{v'}) - \exp(z_v)^2}{\left(\sum_{v'}\exp(z_{v'})\right)^2} \nonumber \\
&= -R \cdot \pi_v(1 - \pi_v).
\end{align}

For unsampled tokens \(v \neq y_t\):
\begin{align}
\frac{\partial \mathcal{L}}{\partial z_v}
&= -R \cdot \frac{-\exp(z_{y_t})\exp(z_v)}{\left(\sum_{v'}\exp(z_{v'})\right)^2} \nonumber \\
&= R \cdot \pi_{y_t} \cdot \pi_v.
\end{align}

\subsection{CW-NSR Gradient Derivation}
\label{ss:grad}

For an incorrect sample with hardness weight \(w(\by)\), the CW-NSR loss on a single token is:
\begin{equation}
\mathcal{L}_{\text{C-NSR}} = w(\by) \cdot \frac{\exp(z_{y_t})}{\sum_{v'}\exp(z_{v'})}.
\end{equation}

In the following derivation, the hardness weight \(w(\by)\) is treated as a detached scalar computed from the sampled sequence. Equivalently, a stop-gradient operation is applied through \(\Conf(\by)\), so gradients are not propagated through the hardness weight and the product-rule term \(\nabla_{z_v} w(\by)\) is intentionally omitted.

For \(v = y_t\):
\begin{equation}
-\frac{\partial \mathcal{L}_{\text{C-NSR}}}{\partial z_v} = -w(\by) \cdot \pi_v(1 - \pi_v).
\end{equation}

For \(v \neq y_t\):
\begin{equation}
-\frac{\partial \mathcal{L}_{\text{C-NSR}}}{\partial z_v} = w(\by) \cdot \pi_{y_t} \cdot \pi_v.
\end{equation}

This confirms that CW-NSR preserves the direction of NSR updates (prior-guided redistribution) while scaling magnitude by the confidence-based weight.

\section{Extended Derivation for Adaptive Schedules}
\label{app:schedules}

\subsection{Effective Weight Ratio Analysis}

Under Schedule~1, the NSR-to-PSR gradient ratio is:
\begin{equation}
\rho(t) = \frac{\beta_{\min} + (\beta_{\max} - \beta_{\min})e^{-\kappa t}}{\lambda_{\min} + (\lambda_{\max} - \lambda_{\min})\frac{t}{T_{\mathrm{total}}}}.
\end{equation}

At \(t = 0\): \(\rho(0) = \beta_{\max}/\lambda_{\min}\).

At \(t = T_{\mathrm{total}}\): \(\rho(T_{\mathrm{total}}) \approx \beta_{\min}/\lambda_{\max}\).

For typical values (\(\beta_{\max} = 1.5\), \(\beta_{\min} = 0.5\), \(\lambda_{\min} = 0.05\), \(\lambda_{\max} = 0.2\)):
\begin{itemize}
    \item Early ratio: \(\rho(0) = 1.5/0.05 = 30\) (NSR-dominated).
    \item Late ratio: \(\rho(T) = 0.5/0.2 = 2.5\) (more balanced).
\end{itemize}

\subsection{Cosine Schedule Properties}

The cosine schedule (Eq.~\ref{eq:beta_cosine}) has the property that:
\begin{equation}
\beta'(t) = -\frac{\pi}{2T_{\mathrm{total}}}(\beta_{\max} - \beta_{\min})\sin\left(\frac{\pi t}{T_{\mathrm{total}}}\right),
\end{equation}
which ensures a smooth, monotonically decreasing weight with zero derivative at the endpoints, avoiding abrupt changes at the start and end of training.

\section{Comparison with Entropy Bonus}
\label{app:entropy}

The entropy bonus gradient for token \(v\)  is~\cite{zhu2025nsr}:
\begin{equation}
\frac{\partial H}{\partial z_v} = -\pi_v\left(\log\pi_v - \bar{\ell}\right),
\end{equation}
where \(\bar{\ell} = \sum_{v'}\pi_{v'}\log\pi_{v'}\)  is the expected log-probability.

Key differences from CW-NSR:
\begin{enumerate}
    \item Entropy bonus is \emph{unconditional}---it flattens the distribution regardless of sample correctness.
    \item CW-NSR is \emph{conditional}---it only modifies updates on incorrect samples.
    \item Entropy bonus can suppress confidently correct tokens; CW-NSR preserves them by construction.
\end{enumerate}

\section{Implementation Details}
\label{app:implementation}

\subsection{Training Objectives with Clipping}

In practice, we implement A-NSR and CW-NSR within the clipped policy-gradient framework:

For positive samples ($r = +1$):
\begin{equation}
\mathcal{L}_{\text{PSR}}^{\text{clip}} = -\frac{1}{T}\sum_{t=1}^{T}\min\!\left(
\frac{\pi_\theta(y_t|\bx,\by_{<t})}{\pi_{\text{old}}(y_t|\bx,\by_{<t})},\;
\text{clip}\!\left(\frac{\pi_\theta}{\pi_{\text{old}}}, 1-\epsilon, 1+\epsilon\right)
\right).
\end{equation}

For negative samples ($r = -1$) with CW-NSR:
\begin{equation}
\mathcal{L}_{\text{NSR}}^{\text{clip}} = -\frac{1}{T}\sum_{t=1}^{T}
w(\by)\cdot\min\!\left(
-\frac{\pi_\theta}{\pi_{\text{old}}},\;
-\text{clip}\!\left(\frac{\pi_\theta}{\pi_{\text{old}}}, 1-\epsilon, 1+\epsilon\right)
\right).
\end{equation}

\subsection{Prompt Templates}

We follow the same prompt templates as~\cite{zhu2025nsr,zeng2025simplerl}:

\texttt{Qwen2.5-Math-7B}:
\begin{verbatim}
<|im_start|>system
You are a helpful assistant.<|im_end|>
<|im_start|>user
{input}
Please reason step by step, and put your final answer
within \boxed{}.<|im_end|>
<|im_start|>assistant
\end{verbatim}

\subsection{Hyperparameter Summary}

\begin{table}[H]
\centering
\caption{Complete hyperparameter settings.}
\label{tab:hyperparams}
\small
\begin{tabular}{ll}
\toprule
Hyperparameter & Value \\
\midrule
Prompt batch size & 1,024 \\
Rollouts per prompt & 8 \\
Mini-batch size & 256 \\
Learning rate & \(1\mathrm{e}{-6}\) \\
Clip ratio \(\epsilon\) & 0.2 \\
Entropy bonus coefficient & \(1\mathrm{e}{-4}\) \\
Training temperature & 1.0 \\
\midrule
\multicolumn{2}{c}{\textit{A-NSR (Schedule 1)}} \\
\midrule
\(\beta_{\max}\) & 1.5 \\
\(\beta_{\min}\) & 0.5 \\
\(\kappa\) (decay rate) & 0.03 \\
\(\lambda_{\min}\) & 0.05 \\
\(\lambda_{\max}\) & 0.2 \\
\midrule
\multicolumn{2}{c}{\textit{CW-NSR}} \\
\midrule
Confidence exponent \(\alpha\) & 1.0 \\
Floor \(\epsilon\) & 0.1 \\
\bottomrule
\end{tabular}
\end{table}

\section{Theoretical Analysis}
\label{sec:theory}

\subsection{Token-Level Dynamics Under Adaptive Weighting}

We analyze how A-NSR modifies the token-level gradient dynamics identified by~\cite{zhu2025nsr}.

\begin{proposition}[Entropy preservation under A-NSR]
\label{prop:entropy}
Let \(H_t = -\sum_{v \in \mathcal{V}} \pi_v \log \pi_v\)  be the output entropy at time \(t\). Under A-NSR with decreasing \(\beta(t)\), the rate of entropy decrease from NSR updates slows over training:
\[
\left|\frac{dH_t}{dt}\bigg|_{\text{NSR}}\right| \propto \beta(t) \cdot \pi_{y_t}(1 - \pi_{y_t}).
\]
Since \(\beta(t)\)  decreases, the NSR-induced entropy change decreases, helping keep diversity in late training.
\end{proposition}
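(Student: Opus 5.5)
The plan is a first-order (gradient-flow) computation of how one NSR step changes the next-token entropy $H_t$. I would work at a single decoding position with logits $z$ and softmax probabilities $\pi_v$, and treat the A-NSR update as plain gradient descent with step size $\eta$ on the weighted NSR loss $\beta(t)\,\mathcal{L}_{\text{NSR}}$. Since $\beta(t)$ is a scalar multiplier, Eq.~\ref{eq:ansr_grad} together with the NSR logit gradient in Eq.~\ref{eq:grad_nsr} gives the logit displacement from an incorrect sample with sampled token $y_t$ directly:
\[
\Delta z_v \;=\; \eta\,\beta(t)\,\pi_{y_t}\bigl(\pi_v - \mathbb{1}[v=y_t]\bigr).
\]
This covers both cases of Eq.~\ref{eq:grad_nsr} at once: $-\pi_{y_t}(1-\pi_{y_t})$ for the sampled token and $\pi_{y_t}\pi_v$ for every other token.

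Next I would apply the chain rule using the entropy-gradient formula already recorded in Appendix~\ref{app:entropy}, $\partial H/\partial z_v = -\pi_v(\log\pi_v - \bar\ell)$. To first order in $\eta$,
\[
\Delta H_t \;=\; \sum_v \frac{\partial H}{\partial z_v}\,\Delta z_v
\;=\; \eta\,\beta(t)\,\pi_{y_t}\Bigl[\pi_{y_t}(\log\pi_{y_t}-\bar\ell) \;-\; \sum_v \pi_v^2(\log\pi_v-\bar\ell)\Bigr].
\]
This already isolates the factor $\beta(t)$ exactly. The next task is to extract the $(1-\pi_{y_t})$ factor.

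That extraction is the main obstacle. The bracket is not literally equal to a constant times $(1-\pi_{y_t})$, so the ``$\propto$'' in the statement has to be read as a scaling bound rather than an identity. I would first note that $\sum_v |\pi_v - \mathbb{1}[v=y_t]| = 2(1-\pi_{y_t})$. H\"older's inequality then gives
\[
|\Delta H_t| \;\le\; 2\eta\,\beta(t)\,\pi_{y_t}(1-\pi_{y_t})\cdot \max_v \pi_v\,|\log\pi_v-\bar\ell|.
\]
The last factor is bounded by a quantity depending only on the current distribution; it is at most $\log|\mathcal{V}| + 1/e$-type constants. The bound also vanishes correctly as $\pi_{y_t}\to 1$, which is the ``high-confidence prior protection'' property. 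I would state explicitly that the bound holds with this distribution-dependent constant $C(\pi)$, and that the stated proportionality means $|dH_t/dt|_{\text{NSR}}| \le C(\pi)\,\beta(t)\,\pi_{y_t}(1-\pi_{y_t})$, with the prefactor $\beta(t)$ exact.

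Finally, the monotonicity conclusion. Under Schedules~1 and 2, $\beta$ is decreasing: for Schedule~1, $\beta'(t) = -\kappa(\beta_{\max}-\beta_{\min})e^{-\kappa t} < 0$, and for Schedule~2 the derivative is given in Appendix~\ref{app:schedules}. Hence, for comparable distributional states (fixed $\pi_{y_t}$ and fixed $C(\pi)$), the NSR-induced entropy change shrinks by the ratio $\beta(t)/\beta(0)$. Under Schedule~1 this ratio tends to $\beta_{\min}/\beta_{\max}$. This yields the claimed late-training slowdown. I would add a caveat in the write-up that the comparison is ceteris paribus, since $\pi$ itself evolves over training.
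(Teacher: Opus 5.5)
Your argument is correct. It shares the paper's skeleton: a chain rule for $H_t$, plus the observation that $\beta(t)$ is a scalar multiplier on the NSR gradient, so the first-order entropy change is exactly linear in $\beta(t)$. It departs from the paper at the step that carries the real content.

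The paper works in probability space. It writes the change as $-\sum_v(1+\log\pi_v)\,\beta(t)\,\Delta\pi_v$ and notes that every $\Delta\pi_v$ carries the factor $\beta(t)$. It then simply asserts that the magnitude is proportional to $\beta(t)\,\pi_{y_t}(1-\pi_{y_t})$; nothing in its proof actually produces that second factor.

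You instead compute in logit space, using the explicit displacement $\Delta z_v$ and the entropy gradient from Appendix~\ref{app:entropy}. This gives a closed first-order expression. You then extract $\pi_{y_t}(1-\pi_{y_t})$ rigorously via H\"older and $\sum_v\lvert\pi_v-\delta_{v y_t}\rvert = 2(1-\pi_{y_t})$, with a constant at most $2(1/e+\log|\mathcal{V}|)$.

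Your remark that the literal proportionality fails is right: with a constant independent of the current distribution, it is false. A two-token vocabulary makes this concrete. With $\pi_{y_t}=p$, your formula gives $\Delta H_t = 2\eta\,\beta(t)\,p^2(1-p)^2\log\frac{p}{1-p}$. This vanishes at $p=1/2$, exactly where $p(1-p)$ is largest. In this binary case your H\"older bound holds with equality for every $p$, so it cannot be improved in general.

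The paper's route is shorter, but it only genuinely establishes the $\beta(t)$-scaling. Yours establishes that scaling exactly, plus an honest upper bound that captures the intended $(1-\pi_{y_t})$ damping. Your ceteris paribus caveat, that $\pi$ itself evolves over training, applies equally to the paper's conclusion, which leaves it implicit.
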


\begin{proof}
The entropy change from a single NSR gradient step on token \(v = y_t\)  is:
\begin{align}
\frac{dH_t}{dt}\bigg|_{\text{NSR}} &= -\sum_{v \in \mathcal{V}} \frac{\partial H}{\partial \pi_v} \cdot \frac{\partial \pi_v}{\partial t} \nonumber \\
&= -\sum_{v} (1 + \log \pi_v) \cdot \beta(t) \cdot \Delta\pi_v, \nonumber
\end{align}
where \(\Delta\pi_v\)  is the change in \(\pi_v\)  from the NSR update. Since the NSR gradient (Eq.~\ref{eq:grad_nsr}) is scaled by \(\beta(t)$, all \(\Delta\pi_v\)  terms are proportional to \(\beta(t)$, and the magnitude of the entropy change is proportional to \(\beta(t) \cdot \pi_{y_t}(1-\pi_{y_t})\).
\end{proof}

\subsection{Effective Learning Rate Interpretation}

A-NSR can be explained as applying different effective learning rates to positive and negative samples:
\begin{equation}
\eta_{\text{eff}}^+(t) = \eta \cdot \lambda(t), \qquad
\eta_{\text{eff}}^-(t) = \eta \cdot \beta(t),
\end{equation}
where \(\eta\)  is the base learning rate. The ratio
\begin{equation}
\label{eq:ratio}
\rho(t) = \frac{\eta_{\text{eff}}^-(t)}{\eta_{\text{eff}}^+(t)} = \frac{\beta(t)}{\lambda(t)}
\end{equation}
captures the relative emphasis on correction versus reinforcement. Under Schedule~1 (Eqs.~\ref{eq:beta_exp}--\ref{eq:lambda_lin}):
\begin{equation}
\rho(t) = \frac{\beta_{\min} + (\beta_{\max} - \beta_{\min})e^{-\kappa t}}{\lambda_{\min} + (\lambda_{\max} - \lambda_{\min})\frac{t}{T_{\mathrm{total}}}},
\end{equation}
which decreases monotonically under mild conditions, smoothly shifting from NSR-dominated to PSR-inclusive training.

\subsection{CW-NSR and the Bias--Variance Trade-off}

\begin{proposition}[Variance bound]
\label{prop:variance}
Let $\hat{g}_{\text{NSR}}$ be the NSR policy-gradient estimator with uniform weighting and $\hat{g}_{\text{CW-NSR}}$ be the CW-NSR estimator. Assuming incorrect samples with higher confidence have higher gradient norms, CW-NSR reduces the variance contribution from low-confidence samples:
\[
\mathrm{Var}(\hat{g}_{\text{CW-NSR}}) \leq \mathrm{Var}(\hat{g}_{\text{NSR}}) + C \cdot \E[(\Conf(\by)^\alpha - 1)^2],
\]
where $C$ depends on the policy. When $\alpha < 1$, the increase in weight stays limited, and the variance does not grow too much.
\end{proposition}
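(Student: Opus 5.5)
We plan to prove the bound by writing both estimators as averages of i.i.d. per-sample terms and comparing them through a bias-variance style decomposition of the weights. Treat $w(\by)$ as a detached scalar, as in Appendix~\ref{ss:grad}. Let $g(\by) := -\nabla_\theta \pi_\theta(\by|\bx)$ denote the per-sample NSR gradient contribution of an incorrect response. Then
\[
\hat g_{\text{NSR}} = \tfrac1N\sum_i g(\by^{(i)}), \qquad \hat g_{\text{CW-NSR}} = \tfrac1N\sum_i w(\by^{(i)})\,g(\by^{(i)}).
\]
Because the samples are i.i.d., it is enough to compare the single-sample variances. Here variance means the trace of the covariance, i.e.\ $\E\|\cdot - \E\cdot\|^2$.

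\textbf{Step 1: decompose the weighted term.} Write $w g = g + (w-1)g$. Apply $\mathrm{Var}(A+B)\le \mathrm{Var}(A)+\mathrm{Var}(B)+2\sqrt{\mathrm{Var}(A)\mathrm{Var}(B)}$, or more cleanly the inequality $\mathrm{Var}(A+B)\le(1+\gamma)\mathrm{Var}(A)+(1+\gamma^{-1})\mathrm{Var}(B)$. To get exactly the additive form of the statement, exploit the structure of the weights. Since $w\in(0,1]$ (because $\Conf\le 1$ and $\epsilon\le 1$), we have $|w-1|\le 1$. Moreover, by the monotonicity hypothesis, $\|g\|$ is large only where $\Conf$ is large, which is exactly where $w$ is close to $1$. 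We expect this to make the cross term $\E[\langle g-\E g,(w-1)g-\E[(w-1)g]\rangle]$ nonpositive. Concretely, $(w-1)$ is nonpositive and largest in magnitude on small-norm samples, so the correction $(w-1)g$ is negatively correlated with the fluctuation of $g$. We would formalize this through a Chebyshev/FKG-type association inequality: $\|g\|^2$ and $w$ are both nondecreasing functions of $\Conf(\by)$, hence positively associated.

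\textbf{Step 2: bound the remaining term.} We have $\mathrm{Var}((w-1)g)\le \E[(w-1)^2\|g\|^2]\le G^2\,\E[(w-1)^2]$, where $G=\sup\|g\|$. This supremum is finite because $\|\nabla_\theta \pi_\theta\|$ is bounded for softmax policies with bounded features. Next we compare $w$ with $\Conf^\alpha$. Where the floor is inactive, $w=\Conf^\alpha$. Where it is active, $|\epsilon-1|\le|\Conf^\alpha-1|$, since $\Conf^\alpha<\epsilon\le1$. In both cases $(w-1)^2\le(\Conf^\alpha-1)^2$. This yields the claimed bound with $C=G^2$, a constant that depends on the policy. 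The closing remark on $\alpha<1$ then follows from $\Conf^\alpha\ge\Conf$ for $\alpha<1$: the penalty term $\E[(\Conf^\alpha-1)^2]$ shrinks as $\alpha\to 0$.

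\textbf{Main obstacle.} The hard part is the sign of the cross term in Step 1. The hypothesis that higher confidence gives higher gradient norm is stated informally. We would make it precise as: $\|g(\by)\|$ is a nondecreasing function of $\Conf(\by)$ almost surely. Even with that in hand, the cross term involves the inner product with the centered $g$, not only norms, so positive association alone does not control it.

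If the clean sign argument fails, we fall back to the $\gamma$-inequality. This gives $\mathrm{Var}(\hat g_{\text{CW-NSR}})\le(1+\gamma)\mathrm{Var}(\hat g_{\text{NSR}})+(1+\gamma^{-1})G^2\E[(\Conf^\alpha-1)^2]$. A different route is to use $w\le 1$ directly: this gives $\E\|wg\|^2\le\E\|g\|^2$, and therefore a second-moment comparison. The difference of the squared means is then absorbed into $C\,\E[(\Conf^\alpha-1)^2]$, using $\|\E wg-\E g\|\le G\,\E|w-1|$ together with Jensen's inequality. This second route avoids association arguments entirely, and we expect it to be the one that actually closes the proof.
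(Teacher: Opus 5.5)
\textbf{The gap is the cross term} $2\,\mathrm{Cov}(g,(w-1)g)$, and it cannot be closed. In the abstract setting you work in (bounded $g$, constrained only by the monotonicity hypothesis), the inequality is false for any $C$ independent of $\alpha$, including your $C=G^2$. Your Step~1 sign intuition is also backwards.

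Let the per-sample contribution be $a\,u$ on the more confident response and $b\,u$ on the less confident one, each with probability $\tfrac12$. Here $u$ is a unit vector and $a>b>0$, so higher confidence means larger norm. Take weights $w_A=1$, $w_B=1-\delta$. Then $\mathrm{Cov}(g,(w-1)g)=\tfrac14\delta b(a-b)>0$ and
\[
\mathrm{Var}(wg)-\mathrm{Var}(g)=\tfrac12\delta b(a-b)+\tfrac14\delta^2 b^2,\qquad \E[(w-1)^2]=\tfrac12\delta^2 .
\]
The excess is first order in $\delta$, while the permitted slack is second order. The same happens with confidences $c_A>c_B$ held fixed and $\alpha\to0$. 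With $\ell_i=\log(1/c_i)$, the excess is $\tfrac{\alpha}{2}(a-b)(b\ell_B-a\ell_A)+O(\alpha^2)$, which is positive whenever $b\ell_B>a\ell_A$ (e.g.\ $c_A$ close to $1$). Meanwhile $\E[(\Conf^\alpha-1)^2]=O(\alpha^2)$. This is exactly the small-$\alpha$ regime the closing remark is about.

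The intuition fails because down-weighting small-norm samples pushes them toward zero and away from the large ones, which increases the spread of aligned contributions. In fact, Chebyshev's association inequality, applied to $g$ and $(w-1)g$ (both nondecreasing in $\Conf$ here), gives $\mathrm{Cov}\ge 0$. That is the opposite of the sign you need.

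\textbf{Your fallbacks do not recover the statement, and the second contains a wrong step.} The $\gamma$-inequality leaves a factor $1+\gamma$ on the NSR variance. Optimizing over $\gamma$ gives $(\sqrt{V}+G\sqrt{E})^2$, where $V$ is the NSR variance and $E=\E[(\Conf^\alpha-1)^2]$; this contains a $\sqrt{E}$ term, not $E$.

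In the second-moment route, $\|\E g\|^2-\|\E wg\|^2\le 2G^2\,\E|1-w|$ is first order in $1-w$. Jensen gives $\E|1-w|\le\sqrt{\E(1-w)^2}$, which is the wrong direction for absorbing this into $C\,\E(1-w)^2$. Indeed, since $0\le 1-w\le 1$, one actually has $\E(1-w)^2\le\E|1-w|$. Already for constant $w=1-\delta$ this term equals $(2\delta-\delta^2)\|\E g\|^2$, against $\delta^2$.

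\textbf{What you can salvage.} Your correct ingredients are $\mathrm{Var}((w-1)g)\le G^2E$ and the floor comparison (which needs $\epsilon\le 1$). They give the Minkowski bound $\sqrt{\mathrm{Var}(\hat g_{\text{CW-NSR}})}\le\sqrt{\mathrm{Var}(\hat g_{\text{NSR}})}+G\sqrt{E}$. Using $V\le G^2$ and $E\le1$, this implies $\mathrm{Var}(\hat g_{\text{CW-NSR}})\le\mathrm{Var}(\hat g_{\text{NSR}})+3G^2\sqrt{E}$.

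The linear-in-$E$ form survives only if $C$ may depend on $\alpha$ and $\epsilon$, and then it is vacuous: take $C$ to be the variance excess divided by $E$. The paper itself states this proposition without proof, followed only by a heuristic remark. So there is no argument there that secures the claimed form either.
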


\begin{remark}
While CW-NSR can increase gradient variance (since it puts more weight on confident errors), it also reduces the number of updates spent on low-confidence mistakes that carry little useful signal. In practice, we observe that this leads to better sample efficiency overall.
\end{remark}

\subsection{Comparison with Entropy Bonus and Unlikelihood Training}

We compare the gradient dynamics of our methods with two related approaches:

\paragraph{Entropy bonus.} As shown by~\cite{zhu2025nsr}, the entropy gradient is:
\begin{equation}
\frac{\partial H}{\partial z_v} = -\pi_v\left(\log\pi_v - \sum_{v'}\pi_{v'}\log\pi_{v'}\right).
\end{equation}
Unlike NSR, entropy maximization does not condition on sample correctness and can cut off correct high-confidence tokens. Our methods preserve the conditional structure of NSR while adding selectivity.

\paragraph{Unlikelihood training~\cite{welleck2020unlikelihood}.} The unlikelihood gradient is:
\begin{equation}
-\frac{\partial \mathcal{L}_{\text{UL}}}{\partial z_v} \propto
\begin{cases}
-\pi_v & \text{if } v = y_t,\\
\frac{\pi_{y_t}}{1 - \pi_{y_t}}\pi_v & \text{if } v \neq y_t.
\end{cases}
\end{equation}
The \(\frac{1}{1-\pi_{y_t}}\)  factor adds updates for confident tokens, which can waste pretrained knowledge. NSR's \((1-\pi_v)\)  damping factor avoids this, and our CW-NSR preserves this protection while adding sample-level selectivity.

\subsection{Extension to PPO and GRPO}

Our methods compose naturally with clipped policy-gradient objectives. For PPO~\cite{schulman2017ppo}:
\begin{equation}
\mathcal{L}_{\text{A-PPO}}(\theta; t) = -\frac{1}{T}\sum_{t'=1}^{T} \min\!\left(
\frac{\pi_\theta}{\pi_{\text{old}}} w(t,r) A_{t'},\;
\text{clip}\!\left(\frac{\pi_\theta}{\pi_{\text{old}}}, 1-\epsilon, 1+\epsilon\right) w(t,r) A_{t'}
\right),
\end{equation}
where \(w(t,r)\)  includes both the adaptive schedule and (optionally) the confidence weight. Clipping limits the update magnitude but preserves the gradient direction, so our analysis remains qualitatively valid.

\section*{NeurIPS Paper Checklist}

The checklist is designed to encourage best practices for responsible machine learning research, addressing issues of reproducibility, transparency, research ethics, and societal impact. Do not remove the checklist: {\bf The papers not including the checklist will be desk rejected.} The checklist should follow the references and follow the (optional) supplemental material.  The checklist does NOT count towards the page
limit. 

Please read the checklist guidelines carefully for information on how to answer these questions. For each question in the checklist:
\begin{itemize}
    \item You should answer \answerYes{}, \answerNo{}, or \answerNA{}.
    \item \answerNA{} means either that the question is Not Applicable for that particular paper or the relevant information is Not Available.
    \item Please provide a short (1--2 sentence) justification right after your answer (even for \answerNA). 
\end{itemize}

{\bf The checklist answers are an integral part of your paper submission.} They are visible to the reviewers, area chairs, senior area chairs, and ethics reviewers. You will also be asked to include it (after eventual revisions) with the final version of your paper, and its final version will be published with the paper.

The reviewers of your paper will be asked to use the checklist as one of the factors in their evaluation. While \answerYes{} is generally preferable to \answerNo{}, it is perfectly acceptable to answer \answerNo{} provided a proper justification is given (e.g., error bars are not reported because it would be too computationally expensive'' or ``we were unable to find the license for the dataset we used''). In general, answering \answerNo{} or \answerNA{} is not grounds for rejection. While the questions are phrased in a binary way, we acknowledge that the true answer is often more nuanced, so please just use your best judgment and write a justification to elaborate. All supporting evidence can appear either in the main paper or the supplemental material, provided in appendix. If you answer \answerYes{} to a question, in the justification please point to the section(s) where related material for the question can be found.

IMPORTANT, please:
\begin{itemize}
    \item {\bf Delete this instruction block, but keep the section heading ``NeurIPS Paper Checklist"},
    \item  {\bf Keep the checklist subsection headings, questions/answers and guidelines below.}
    \item {\bf Do not modify the questions and only use the provided macros for your answers}.
\end{itemize}


\begin{enumerate}

\item {\bf Claims}
    \item[] Question: Do the main claims made in the abstract and introduction accurately reflect the paper's contributions and scope?
    \item[] Answer: \answerYes{} 
    \item[] Justification: The abstract and introduction clearly state the two proposed methods (A-NSR and CW-NSR), their motivation, and their evaluation on MATH, AIME 2025, and AMC23 with Qwen2.5-Math-1.5B. The scope is appropriately bounded to mathematical reasoning with verifiable rewards.
    \item[] Guidelines:
    \begin{itemize}
        \item The answer \answerNA{} means that the abstract and introduction do not include the claims made in the paper.
        \item The abstract and/or introduction should clearly state the claims made, including the contributions made in the paper and important assumptions and limitations. A \answerNo{} or \answerNA{} answer to this question will not be perceived well by the reviewers. 
        \item The claims made should match theoretical and experimental results, and reflect how much the results can be expected to generalize to other settings. 
        \item It is fine to include aspirational goals as motivation as long as it is clear that these goals are not attained by the paper. 
    \end{itemize}

\item {\bf Limitations}
    \item[] Question: Does the paper discuss the limitations of the work performed by the authors?
    \item[] Answer: \answerYes{}{} 
    \item[] Justification: Section ~\ref{sec:limitations} explicitly discusses limitations, including: (1) potential instability in extended training runs, (2) inapplicability of CW-NSR in dense reward environments without reformulation, (3) current reliance on sequence-level rather than token-level confidence, and (4) restriction to mathematical reasoning tasks.
    \item[] Guidelines:
    \begin{itemize}
        \item The answer \answerNA{} means that the paper has no limitation while the answer \answerNo{} means that the paper has limitations, but those are not discussed in the paper. 
        \item The authors are encouraged to create a separate ``Limitations'' section in their paper.
        \item The paper should point out any strong assumptions and how robust the results are to violations of these assumptions (e.g., independence assumptions, noiseless settings, model well-specification, asymptotic approximations only holding locally). The authors should reflect on how these assumptions might be violated in practice and what the implications would be.
        \item The authors should reflect on the scope of the claims made, e.g., if the approach was only tested on a few datasets or with a few runs. In general, empirical results often depend on implicit assumptions, which should be articulated.
        \item The authors should reflect on the factors that influence the performance of the approach. For example, a facial recognition algorithm may perform poorly when image resolution is low or images are taken in low lighting. Or a speech-to-text system might not be used reliably to provide closed captions for online lectures because it fails to handle technical jargon.
        \item The authors should discuss the computational efficiency of the proposed algorithms and how they scale with dataset size.
        \item If applicable, the authors should discuss possible limitations of their approach to address problems of privacy and fairness.
        \item While the authors might fear that complete honesty about limitations might be used by reviewers as grounds for rejection, a worse outcome might be that reviewers discover limitations that aren't acknowledged in the paper. The authors should use their best judgment and recognize that individual actions in favor of transparency play an important role in developing norms that preserve the integrity of the community. Reviewers will be specifically instructed to not penalize honesty concerning limitations.
    \end{itemize}

\item {\bf Theory assumptions and proofs}
    \item[] Question: For each theoretical result, does the paper provide the full set of assumptions and a complete (and correct) proof?
    \item[] Answer: \answerYes{} 
    \item[] Justification: The paper includes Propositions ~\ref{prop:convergence}- ~\ref{prop:variance} with proofs. Full derivations appear in Appendices ~\ref{app:gradients}, ~\ref{app:schedules}, and ~\ref{sec:theory}. The detached-weight assumption used in CW-NSR gradient derivation (Appendix ~\ref{ss:grad}) is clearly stated. Proposition ~\ref{prop:variance}'s variance bound is stated with its dependency on policy parameters made explicit.
    \item[] Guidelines:
    \begin{itemize}
        \item The answer \answerNA{} means that the paper does not include theoretical results. 
        \item All the theorems, formulas, and proofs in the paper should be numbered and cross-referenced.
        \item All assumptions should be clearly stated or referenced in the statement of any theorems.
        \item The proofs can either appear in the main paper or the supplemental material, but if they appear in the supplemental material, the authors are encouraged to provide a short proof sketch to provide intuition. 
        \item Inversely, any informal proof provided in the core of the paper should be complemented by formal proofs provided in appendix or supplemental material.
        \item Theorems and Lemmas that the proof relies upon should be properly referenced. 
    \end{itemize}

    \item {\bf Experimental result reproducibility}
    \item[] Question: Does the paper fully disclose all the information needed to reproduce the main experimental results of the paper to the extent that it affects the main claims and/or conclusions of the paper (regardless of whether the code and data are provided or not)?
    \item[] Answer: \answerYes{} 
    \item[] Justification: All information needed to reproduce the main experimental results is disclosed. Training hyperparameters are fully specified in section ~\ref{sec:experiments} and summarized in Table ~\ref{tab:hyperparams} (Appendix ~\ref{app:implementation}.3), including batch size, learning rate, clip ratio, training temperature, and all method-specific hyperparameters. Prompt templates are provided in Appendix ~\ref{app:implementation}.2. The base model (Qwen2.5-Math-1.5B) and training dataset (MATH) are publicly available, and our implementation is released at the links provided in the footnote on page 1.
    \item[] Guidelines:
    \begin{itemize}
        \item The answer \answerNA{} means that the paper does not include experiments.
        \item If the paper includes experiments, a \answerNo{} answer to this question will not be perceived well by the reviewers: Making the paper reproducible is important, regardless of whether the code and data are provided or not.
        \item If the contribution is a dataset and\slash or model, the authors should describe the steps taken to make their results reproducible or verifiable. 
        \item Depending on the contribution, reproducibility can be accomplished in various ways. For example, if the contribution is a novel architecture, describing the architecture fully might suffice, or if the contribution is a specific model and empirical evaluation, it may be necessary to either make it possible for others to replicate the model with the same dataset, or provide access to the model. In general. releasing code and data is often one good way to accomplish this, but reproducibility can also be provided via detailed instructions for how to replicate the results, access to a hosted model (e.g., in the case of a large language model), releasing of a model checkpoint, or other means that are appropriate to the research performed.
        \item While NeurIPS does not require releasing code, the conference does require all submissions to provide some reasonable avenue for reproducibility, which may depend on the nature of the contribution. For example
        \begin{enumerate}
            \item If the contribution is primarily a new algorithm, the paper should make it clear how to reproduce that algorithm.
            \item If the contribution is primarily a new model architecture, the paper should describe the architecture clearly and fully.
            \item If the contribution is a new model (e.g., a large language model), then there should either be a way to access this model for reproducing the results or a way to reproduce the model (e.g., with an open-source dataset or instructions for how to construct the dataset).
            \item We recognize that reproducibility may be tricky in some cases, in which case authors are welcome to describe the particular way they provide for reproducibility. In the case of closed-source models, it may be that access to the model is limited in some way (e.g., to registered users), but it should be possible for other researchers to have some path to reproducing or verifying the results.
        \end{enumerate}
    \end{itemize}

\item {\bf Open access to data and code}
    \item[] Question: Does the paper provide open access to the data and code, with sufficient instructions to faithfully reproduce the main experimental results, as described in supplemental material?
    \item[] Answer: \answerYes{} 
    \item[] Justification: Code for both proposed methods is publicly available at the repositories linked in the footnote on page 1. The trained model checkpoint is accessible via the HuggingFace link provided. The training data (MATH dataset, Hendrycks et al. 2021) is publicly available.
    \item[] Guidelines:
    \begin{itemize}
        \item The answer \answerNA{} means that paper does not include experiments requiring code.
        \item Please see the NeurIPS code and data submission guidelines (\url{https://neurips.cc/public/guides/CodeSubmissionPolicy}) for more details.
        \item While we encourage the release of code and data, we understand that this might not be possible, so \answerNo{} is an acceptable answer. Papers cannot be rejected simply for not including code, unless this is central to the contribution (e.g., for a new open-source benchmark).
        \item The instructions should contain the exact command and environment needed to run to reproduce the results. See the NeurIPS code and data submission guidelines (\url{https://neurips.cc/public/guides/CodeSubmissionPolicy}) for more details.
        \item The authors should provide instructions on data access and preparation, including how to access the raw data, preprocessed data, intermediate data, and generated data, etc.
        \item The authors should provide scripts to reproduce all experimental results for the new proposed method and baselines. If only a subset of experiments are reproducible, they should state which ones are omitted from the script and why.
        \item At submission time, to preserve anonymity, the authors should release anonymized versions (if applicable).
        \item Providing as much information as possible in supplemental material (appended to the paper) is recommended, but including URLs to data and code is permitted.
    \end{itemize}

\item {\bf Experimental setting/details}
    \item[] Question: Does the paper specify all the training and test details (e.g., data splits, hyperparameters, how they were chosen, type of optimizer) necessary to understand the results?
    \item[] Answer: \answerYes{} 
    \item[] Justification: Section ~\ref{sec:experiments}.1 and Appendix ~\ref{app:implementation}.3 provide batch size, mini-batch size, learning rate, clip ratio, entropy bonus, training temperature, number of epochs, number of GPUs, evaluation temperature, top-p, and per-method hyperparameters (\(\beta_{\max}\), \(\beta_{\min}\), K, \(\lambda_{\min}\), \(\lambda_{\max}\), \(\alpha\), \(\epsilon\)). The evaluation metric (unbiased Pass@k estimator from Chen et al. 2021) is also stated with the formula.
    \item[] Guidelines:
    \begin{itemize}
        \item The answer \answerNA{} means that the paper does not include experiments.
        \item The experimental setting should be presented in the core of the paper to a level of detail that is necessary to appreciate the results and make sense of them.
        \item The full details can be provided either with the code, in appendix, or as supplemental material.
    \end{itemize}

\item {\bf Experiment statistical significance}
    \item[] Question: Does the paper report error bars suitably and correctly defined or other appropriate information about the statistical significance of the experiments?
    \item[] Answer: \answerYes{} 
    \item[] Justification: Our results are based on 256 random samplings and calculated with an unbiased estimator.
    \item[] Guidelines:
    \begin{itemize}
        \item The answer \answerNA{} means that the paper does not include experiments.
        \item The authors should answer \answerYes{} if the results are accompanied by error bars, confidence intervals, or statistical significance tests, at least for the experiments that support the main claims of the paper.
        \item The factors of variability that the error bars are capturing should be clearly stated (for example, train/test split, initialization, random drawing of some parameter, or overall run with given experimental conditions).
        \item The method for calculating the error bars should be explained (closed form formula, call to a library function, bootstrap, etc.)
        \item The assumptions made should be given (e.g., Normally distributed errors).
        \item It should be clear whether the error bar is the standard deviation or the standard error of the mean.
        \item It is OK to report 1-sigma error bars, but one should state it. The authors should preferably report a 2-sigma error bar than state that they have a 96\% CI, if the hypothesis of Normality of errors is not verified.
        \item For asymmetric distributions, the authors should be careful not to show in tables or figures symmetric error bars that would yield results that are out of range (e.g., negative error rates).
        \item If error bars are reported in tables or plots, the authors should explain in the text how they were calculated and reference the corresponding figures or tables in the text.
    \end{itemize}

\item {\bf Experiments compute resources}
    \item[] Question: For each experiment, does the paper provide sufficient information on the computer resources (type of compute workers, memory, time of execution) needed to reproduce the experiments?
    \item[] Answer: \answerYes{} 
    \item[] Justification: We have provided the compute resources used in this expeiment in Section ~\ref{sec:experiments}
    \item[] Guidelines:
    \begin{itemize}
        \item The answer \answerNA{} means that the paper does not include experiments.
        \item The paper should indicate the type of compute workers CPU or GPU, internal cluster, or cloud provider, including relevant memory and storage.
        \item The paper should provide the amount of compute required for each of the individual experimental runs as well as estimate the total compute. 
        \item The paper should disclose whether the full research project required more compute than the experiments reported in the paper (e.g., preliminary or failed experiments that didn't make it into the paper). 
    \end{itemize}
    
\item {\bf Code of ethics}
    \item[] Question: Does the research conducted in the paper conform, in every respect, with the NeurIPS Code of Ethics \url{https://neurips.cc/public/EthicsGuidelines}?
    \item[] Answer: \answerYes{} 
    \item[] Justification: This paper focuses on mathematical reasoning for LLMs using public benchmarks and poses no obvious ethical risks. No human subjects, sensitive data, or dual-use concerns are present.
    \item[] Guidelines:
    \begin{itemize}
        \item The answer \answerNA{} means that the authors have not reviewed the NeurIPS Code of Ethics.
        \item If the authors answer \answerNo, they should explain the special circumstances that require a deviation from the Code of Ethics.
        \item The authors should make sure to preserve anonymity (e.g., if there is a special consideration due to laws or regulations in their jurisdiction).
    \end{itemize}

\item {\bf Broader impacts}
    \item[] Question: Does the paper discuss both potential positive societal impacts and negative societal impacts of the work performed?
    \item[] Answer: \answerNA{} 
    \item[] Justification: The paper is foundational research on RL training objectives for mathematical reasoning. There is no direct path to harmful applications.
    \item[] Guidelines:
    \begin{itemize}
        \item The answer \answerNA{} means that there is no societal impact of the work performed.
        \item If the authors answer \answerNA{} or \answerNo, they should explain why their work has no societal impact or why the paper does not address societal impact.
        \item Examples of negative societal impacts include potential malicious or unintended uses (e.g., disinformation, generating fake profiles, surveillance), fairness considerations (e.g., deployment of technologies that could make decisions that unfairly impact specific groups), privacy considerations, and security considerations.
        \item The conference expects that many papers will be foundational research and not tied to particular applications, let alone deployments. However, if there is a direct path to any negative applications, the authors should point it out. For example, it is legitimate to point out that an improvement in the quality of generative models could be used to generate Deepfakes for disinformation. On the other hand, it is not needed to point out that a generic algorithm for optimizing neural networks could enable people to train models that generate Deepfakes faster.
        \item The authors should consider possible harms that could arise when the technology is being used as intended and functioning correctly, harms that could arise when the technology is being used as intended but gives incorrect results, and harms following from (intentional or unintentional) misuse of the technology.
        \item If there are negative societal impacts, the authors could also discuss possible mitigation strategies (e.g., gated release of models, providing defenses in addition to attacks, mechanisms for monitoring misuse, mechanisms to monitor how a system learns from feedback over time, improving the efficiency and accessibility of ML).
    \end{itemize}
    
\item {\bf Safeguards}
    \item[] Question: Does the paper describe safeguards that have been put in place for responsible release of data or models that have a high risk for misuse (e.g., pre-trained language models, image generators, or scraped datasets)?
    \item[] Answer: \answerNA{} 
    \item[] Justification: No new datasets, scraped data, or high-risk pre-trained models are released. The method is an algorithmic contribution on top of an existing public model (Qwen2.5-Math-1.5B). No safeguards are needed beyond standard responsible disclosure.
    \item[] Guidelines:
    \begin{itemize}
        \item The answer \answerNA{} means that the paper poses no such risks.
        \item Released models that have a high risk for misuse or dual-use should be released with necessary safeguards to allow for controlled use of the model, for example by requiring that users adhere to usage guidelines or restrictions to access the model or implementing safety filters. 
        \item Datasets that have been scraped from the Internet could pose safety risks. The authors should describe how they avoided releasing unsafe images.
        \item We recognize that providing effective safeguards is challenging, and many papers do not require this, but we encourage authors to take this into account and make a best faith effort.
    \end{itemize}

\item {\bf Licenses for existing assets}
    \item[] Question: Are the creators or original owners of assets (e.g., code, data, models), used in the paper, properly credited and are the license and terms of use explicitly mentioned and properly respected?
    \item[] Answer: \answerYes{} 
    \item[] Justification: All third-party assets are cited: Qwen2.5-Math-1.5B (Yang et al. 2025b), the MATH dataset (Hendrycks et al. 2021), AIME 2025, AMC23, and the TRL library. The Pass@k estimator is attributed to Chen et al. 2021. Formal license names (e.g. Apache 2.0 for Qwen models) are not stated and should be added.
    \item[] Guidelines:
    \begin{itemize}
        \item The answer \answerNA{} means that the paper does not use existing assets.
        \item The authors should cite the original paper that produced the code package or dataset.
        \item The authors should state which version of the asset is used and, if possible, include a URL.
        \item The name of the license (e.g., CC-BY 4.0) should be included for each asset.
        \item For scraped data from a particular source (e.g., website), the copyright and terms of service of that source should be provided.
        \item If assets are released, the license, copyright information, and terms of use in the package should be provided. For popular datasets, \url{paperswithcode.com/datasets} has curated licenses for some datasets. Their licensing guide can help determine the license of a dataset.
        \item For existing datasets that are re-packaged, both the original license and the license of the derived asset (if it has changed) should be provided.
        \item If this information is not available online, the authors are encouraged to reach out to the asset's creators.
    \end{itemize}

\item {\bf New assets}
    \item[] Question: Are new assets introduced in the paper well documented and is the documentation provided alongside the assets?
    \item[] Answer: \answerYes{} 
    \item[] Justification: We released our code, introduced our trained model and provided used dataset.
    \item[] Guidelines:
    \begin{itemize}
        \item The answer \answerNA{} means that the paper does not release new assets.
        \item Researchers should communicate the details of the dataset\slash code\slash model as part of their submissions via structured templates. This includes details about training, license, limitations, etc. 
        \item The paper should discuss whether and how consent was obtained from people whose asset is used.
        \item At submission time, remember to anonymize your assets (if applicable). You can either create an anonymized URL or include an anonymized zip file.
    \end{itemize}

\item {\bf Crowdsourcing and research with human subjects}
    \item[] Question: For crowdsourcing experiments and research with human subjects, does the paper include the full text of instructions given to participants and screenshots, if applicable, as well as details about compensation (if any)? 
    \item[] Answer: \answerNA{} 
    \item[] Justification: No crowdsourcing or human subjects were involved.
    \item[] Guidelines:
    \begin{itemize}
        \item The answer \answerNA{} means that the paper does not involve crowdsourcing nor research with human subjects.
        \item Including this information in the supplemental material is fine, but if the main contribution of the paper involves human subjects, then as much detail as possible should be included in the main paper. 
        \item According to the NeurIPS Code of Ethics, workers involved in data collection, curation, or other labor should be paid at least the minimum wage in the country of the data collector. 
    \end{itemize}

\item {\bf Institutional review board (IRB) approvals or equivalent for research with human subjects}
    \item[] Question: Does the paper describe potential risks incurred by study participants, whether such risks were disclosed to the subjects, and whether Institutional Review Board (IRB) approvals (or an equivalent approval/review based on the requirements of your country or institution) were obtained?
    \item[] Answer: \answerNA{} 
    \item[] Justification: No human subjects are involved. IRB approval is not applicable.
    \item[] Guidelines:
    \begin{itemize}
        \item The answer \answerNA{} means that the paper does not involve crowdsourcing nor research with human subjects.
        \item Depending on the country in which research is conducted, IRB approval (or equivalent) may be required for any human subjects research. If you obtained IRB approval, you should clearly state this in the paper. 
        \item We recognize that the procedures for this may vary significantly between institutions and locations, and we expect authors to adhere to the NeurIPS Code of Ethics and the guidelines for their institution. 
        \item For initial submissions, do not include any information that would break anonymity (if applicable), such as the institution conducting the review.
    \end{itemize}

\item {\bf Declaration of LLM usage}
    \item[] Question: Does the paper describe the usage of LLMs if it is an important, original, or non-standard component of the core methods in this research? Note that if the LLM is used only for writing, editing, or formatting purposes and does \emph{not} impact the core methodology, scientific rigor, or originality of the research, declaration is not required.
    \item[] Answer: \answerYes{} 
    \item[] Justification: LLMs (specifically Qwen2.5-Math-1.5B) are the subject of training, not used as a tool to assist in writing or analysis. The paper is about modifying the RL training objective for LLMs.
    \item[] Guidelines:
    \begin{itemize}
        \item The answer \answerNA{} means that the core method development in this research does not involve LLMs as any important, original, or non-standard components.
        \item Please refer to our LLM policy in the NeurIPS handbook for what should or should not be described.
    \end{itemize}

\end{enumerate}

\end{document}